\documentclass[10pt,twocolumn,letterpaper]{article}

\usepackage{cvpr}
\usepackage{times}
\usepackage{epsfig}
\usepackage{graphicx}
\usepackage{amsmath}
\usepackage{amssymb}
\usepackage{amsthm}
\usepackage{algorithm}
\usepackage{algorithmic}
\usepackage{multirow}
\usepackage{subfigure}
\usepackage{epstopdf}
\usepackage{enumitem}
\usepackage[page]{appendix}

% Include other packages here, before hyperref.
\usepackage{tikz}
\usepackage{tikz-3dplot}
\usetikzlibrary{arrows}
% If you comment hyperref and then uncomment it, you should delete
% egpaper.aux before re-running latex.  (Or just hit 'q' on the first latex
% run, let it finish, and you should be clear).
\usepackage[pagebackref=true,breaklinks=true,letterpaper=true,colorlinks,bookmarks=false]{hyperref}

\cvprfinalcopy % *** Uncomment this line for the final submission

 % *** Enter the ICCV Paper ID here

 %\todo{sth}

\newtheorem{definition}{Definition}
\newtheorem{lemma}{Lemma}

\newtheorem{theorem}{Theorem}
\newtheorem{corollary}{Corollary}
\newtheorem*{lemma*}{Lemma}
\newtheorem*{theorem*}{Theorem}
\newtheorem*{corollary*}{Corollary}

\def\a{\boldsymbol{a}}
\def\b{\boldsymbol{b}}
\def\c{\boldsymbol{c}}
\def\q{\boldsymbol{q}}
\def\x{\boldsymbol{x}}
\def\y{\boldsymbol{y}}
\def\w{\boldsymbol{w}}
\def\cX{\mathcal{X}}
\def\cY{\mathcal{Y}}
\def\cV{\mathcal{V}}
\def\cW{\mathcal{W}}
\def\cP{\mathcal{P}}
\def\Re{\mathbb{R}}
\def\0{\mathbf{0}}
\def\OMP{\text{OMP}}
\DeclareMathOperator*{\card}{card}
\DeclareMathOperator*{\diag}{diag}
\DeclareMathOperator*{\rank}{rank}
\DeclareMathOperator*{\conv}{conv}
\DeclareMathOperator*{\argmin}{arg\,min}
\DeclareMathOperator*{\argmax}{arg\,max}
\def\transpose{\top} % Vector and Matrix Transpose
\newcommand{\myparagraph}[1]{\smallskip\noindent\textbf{#1.}}
\newcommand{\mysubparagraph}[1]{\smallskip\noindent-- \emph{#1:}}

\def\st{~~\mathrm{s.t.}~~}

\graphicspath{{figures/}}

% Pages are numbered in submission mode, and unnumbered in camera-ready
\ifcvprfinal\pagestyle{empty}\fi
\begin{document}
	
	%%%%%%%%% TITLE
	\title{Orthogonal Matching Pursuit for Sparse Subspace Clustering}
	\title{Scalable Sparse Subspace Clustering by Orthogonal Matching Pursuit}
	
	\author{Chong You, Daniel P. Robinson, Ren\'e Vidal\\
		Johns Hopkins University, Baltimore, MD, 21218, USA\\
		%Institution1 address\\
		%	{\tt\small \{cyou,rvidal\}@cis.jhu.edu}
		% For a paper whose authors are all at the same institution,
		% omit the following lines up until the closing ``}''.
		% Additional authors and addresses can be added with ``\and'',
		% just like the second author.
		% To save space, use either the email address or home page, not both
		%\and
		%Second Author\\
		%Institution2\\
		%First line of institution2 address\\
		%{\tt\small secondauthor@i2.org}
	}
	
	\maketitle
	\thispagestyle{empty}

	%%%%%%%%% ABSTRACT
	\begin{abstract}
		Subspace clustering methods based on $\ell_1$, $\ell_2$ or nuclear norm regularization have become very popular due to their simplicity, theoretical guarantees and empirical success. However, the choice of the regularizer can greatly impact both theory and practice. For instance, $\ell_1$ regularization is guaranteed to give a subspace-preserving affinity (i.e., there are no connections between points from different subspaces) under broad conditions (\eg, arbitrary subspaces and corrupted data). However, it requires solving a large scale convex optimization problem. On the other hand, $\ell_2$ and nuclear norm regularization provide efficient closed form solutions, but require very strong assumptions to guarantee a subspace-preserving affinity, \eg, independent subspaces and uncorrupted data. In this paper we study a subspace clustering method based on orthogonal matching pursuit. We show that the method is both computationally efficient and guaranteed to give a subspace-preserving affinity under broad conditions. Experiments on synthetic data verify our theoretical analysis, and applications in handwritten digit and face clustering show that our approach achieves the best trade off between accuracy and efficiency.\!\!\!
	\end{abstract}

	%%%%%%%%% BODY TEXT
	\section{Introduction}
	In many computer vision applications, such as 
	motion segmentation \cite{Costeira:IJCV98,Vidal:IJCV08,Rao:PAMI10},
	hand written digit clustering \cite{Zhang:IJCV12} and face clustering
	\cite{Basri:PAMI03,Kriegman:PAMI05}, data from different classes can be well approximated by a union of low dimensional subspaces. In these scenarios, the task is to partition the data according to the membership of data points to subspaces. 
	
	More formally, given a set of points $\cX = \{\x_j \!\in\! \Re^D\}_{j=1}^N$ lying in an unknown number $n$ of subspaces $\{S_i\}_{i=1}^n$ of unknown dimensions $\{d_i\}_{i=1}^n$, \emph{subspace clustering} is the problem of clustering the data into groups such that each group contains only data points from the same subspace. This problem has received great attention in the past decade and many subspace clustering algorithms have been developed, including iterative, algebraic, statistical, and spectral clustering based methods (see \cite{Vidal:SPM11-SC} for a review).

	\myparagraph{Sparse and Low Rank Methods}
	Among existing techniques, methods based
	on applying spectral clustering to an affinity matrix obtained by
	solving an optimization problem that incorporates $\ell_1$, $\ell_2$ or nuclear norm regularization have become extremely popular due to their simplicity, theoretical correctness, and empirical success. These methods are based on the so-called \emph{self-expressiveness property} of data lying in a union of subspaces, originally proposed in \cite{Elhamifar:CVPR09}. This property states that each point in a union of subspaces can be written as a linear combination of other data points in the subspaces. That is, 
	\begin{equation}\label{eqn:self-expression}
	\begin{aligned}
	\x_j &= X \c_j \ \ \text{and} \ \  c_{jj} = 0, \ \ \text{or equivalently} \\
	X &= XC \ \ \text{and} \ \ \diag(C) = \0,
	\end{aligned}
	\end{equation}
	%\begin{equation}
	%	\label{eqn:self-expression}
	%	\x_j = X \c_j, ~~ c_{jj} = 0, ~~\text{or} ~~ X = XC, ~~\diag(C) = \0,
	%\end{equation}
	%
	where $X = \begin{bmatrix} \x_1, \dots, \x_N \end{bmatrix} \in \Re^{D \times N}$ is the data matrix and $C = \begin{bmatrix} \c_1, \dots, \c_N \end{bmatrix}\in\Re^{N\times N}$ is the matrix of coefficients. 
	
	While \eqref{eqn:self-expression} may not have a unique solution for $C$, there exist solutions whose entries are such that if $c_{ij}\neq 0$, then $\x_i$ is in the same subspace as $\x_j$. For example, a point $\x_j \in S_i$ can always be written as a linear combination of $d_i$ other points in $S_i$. Such solutions are called \emph{subspace preserving} since they preserve the clustering of the subspaces. Given a subspace preserving $C$, one can build an affinity matrix $W$ between every pair of points $\x_i$ and $\x_j$ as $w_{ij} = |c_{ij}| + |c_{ji}|$, and apply spectral clustering \cite{vonLuxburg:StatComp2007} to $W$ to cluster the data.
	
	To find a subspace preserving $C$, existing methods regularize $C$ with a norm $\| \cdot \|$, and solve a problem of the form:
	\begin{equation}
	\label{eqn:self-expression-regularized}
	C^* = \argmin_{C} \|C\| \st X = X C, ~~ \diag(C) = \0. 
	\end{equation}
	For instance, the sparse subspace clustering (SSC) algorithm \cite{Elhamifar:CVPR09} uses the $\ell_1$ norm to encourage the sparsity of $C$. Prior work has shown that SSC gives a subspace-preserving solution if the subspaces are independent \cite{Elhamifar:CVPR09,Elhamifar:TPAMI13}, or if the data from different subspaces satisfy certain separation conditions and data from the same subspace are well spread out \cite{Elhamifar:ICASSP10,Elhamifar:TPAMI13,You:ICML15,Soltanolkotabi:AS13}. Similar results exist for cases where data is corrupted by noise \cite{Wang-Xu:ICML13,Soltanolkotabi:AS14} and outliers \cite{Soltanolkotabi:AS13}.  Other self-expressiveness based methods use different regularizations on the coefficient matrix $C$. Least squares regression (LSR) \cite{Lu:ECCV12} uses $\ell_2$ regularization on $C$. Low rank representation (LRR)  \cite{Liu:ICML10,Liu:TPAMI13} and low rank subspace clustering (LRSC) \cite{Favaro:CVPR11,Vidal:PRL14} use nuclear norm minimization to encourage $C$ to be low-rank. Based on these, \cite{Lu:ICCV13-TraceLasso,Panagakis:PRL14,Lai:ECCV14,You:CVPR16-EnSC} study regularizations that are a mixture of $\ell_1$ and $\ell_2$, and \cite{Wang:NIPS13-LRR+SSC,Zhuang:CVPR12} propose regularizations that are a blend of $\ell_1$ and the nuclear norm.
	
	The advantage of $\ell_2$ regularized LSR and nuclear norm regularized LRR and LRSC over sparsity regularized SSC is that the solution for $C$ can be computed in closed form from the SVD of the (noiseless) data matrix $X$, thus they are computationally more attractive. However, the resulting $C$ is subspace preserving only when subspaces are independent and the data is uncorrupted. Thus, there is a need for methods that both guarantee a subspace-preserving affinity under broad conditions and are computationally efficient.
	
	\myparagraph{Paper Contributions}
	In this work we study the self-expressiveness based subspace clustering method that uses orthogonal matching pursuit (OMP) to find a sparse representation in lieu of the $\ell_1$-based basis pursuit (BP) method. The method is termed SSC-OMP, for its kinship to the original SSC, which is referred to as SSC-BP in this paper. 
	
	The main contributions of this paper are to find theoretical conditions under which the affinity produced by SSC-OMP is subspace preserving and to demonstrate its efficiency for large scale problems. Specifically, we show that:
	
	\begin{enumerate}[topsep=0.3em,noitemsep]
		\item When the subspaces and the data are \textbf{deterministic}, SSC-OMP gives a subspace-preserving $C$ if the subspaces are independent, or else if the subspaces are sufficiently separated and the data is well distributed.
		\vspace{0.3em}
		\item When the subspaces and data are drawn uniformly at \textbf{random}, SSC-OMP gives a subspace-preserving $C$ if the dimensions of the subspaces are sufficiently small relative to the ambient dimension by a factor controlled by the sample density and the number of subspaces.
		\vspace{0.3em}
		\item SSC-OMP is orders of magnitude \textbf{faster} than the original SSC-BP, and can handle up to $100,\! 000$ data points.
	\end{enumerate}
	
	\myparagraph{Related work}
	It is worth noting that the idea of using OMP for SSC had already been considered in \cite{Dyer:JMLR13}. The core contribution of our work is to provide much weaker yet more succinct and interpretable conditions for the affinity to be subspace preserving in the case of arbitrary subspaces. In particular, our conditions are naturally related to those for SSC-BP, which reveal insights about the relationship between these two sparsity-based subspace clustering methods. Moreover, our experimental results provide a much more detailed evaluation of the behavior of SSC-OMP for large-scale problems. It is also worth noting that conditions under which OMP gives a subspace-preserving representation had also been studied in \cite{You:ICML15}. Our paper presents a much more comprehensive study of OMP for the subspace clustering problem, by providing results under deterministic independent, deterministic arbitrary and random subspace models. In particular, our result for deterministic arbitrary models is much stronger than the main result in \cite{You:ICML15}.

	%\myparagraph{Paper Outline} The remainder of the paper is organized as follows. In Section \ref{sec:background} we formulate the problem to be studied and review the OMP algorithm. In Section \ref{sec:main-results} we analyze SSC-OMP for a deterministic model where the subspaces and the data are fixed, as well as for a model where both the subspaces and the data points are drawn at random. In Section \ref{sec:comparison} we compare our results with those in prior work. In Section \ref{sec:experiments} we present experiments that verify our theoretical analysis and demonstrate the performance of SSC-OMP for vision problems. Finally, we draw conclusions in Section \ref{sec:conclusion}.

	\section{SSC by Orthogonal Matching Pursuit}
	\label{sec:background}
	
	%, and $X_{-j} \in \Re^{D\times(N-1)}$ a matrix whose columns are all the data points minus the $j$th. 
	
	The SSC algorithm approaches the subspace clustering problem by finding a sparse representation of each point in terms of other data points. Since each point in $S_i$ can be expressed in terms of at most $d_i\ll N$ other points in $S_i$, such a sparse representation always exists. In principle, we can find it by solving the following optimization problem:
	\begin{equation}
	\c_j^* = \argmin_{\c_j} \|\c_j\|_0 \st \x_j = X \c_j, c_{jj} = 0,
	\label{eq:L_0}
	\end{equation}
	where $\|\c\|_0$ counts the number of nonzero entries in $\c$. Since this problem is NP hard, the SSC method in \cite{Elhamifar:CVPR09} relaxes this problem and solves the following $\ell_1$ problem:
	\begin{equation}
	\c_j^* = \argmin_{\c_j} \|\c_j\|_1 \st \x_j = X \c_j, ~~ c_{jj} = 0. 
	\label{eq:L_1}
	\end{equation}
	Since this problem is called the basis pursuit (BP) problem, we refer to the SSC algorithm in \cite{Elhamifar:CVPR09} as SSC-BP.

	The optimization problems \eqref{eq:L_0} and \eqref{eq:L_1} have been studied extensively in the compressed sensing community, see, \eg, the tutorials \cite{Elad:SIAM09,Candes:SPM08}, and it is well known that, under certain conditions on the dictionary $X$,  their solutions are the same. However, results from compressed sensing do not apply to the subspace clustering problem because when the columns of $X$ lie in a union of subspaces the solution for $C$ need not be unique (see Section \ref{sec:main-results} for more details). This has motivated extensive research on the conditions under which the solutions of \eqref{eq:L_0} or \eqref{eq:L_1} are useful for subspace clustering.
	
	It is shown in \cite{Elhamifar:CVPR09,Elhamifar:ICASSP10,Elhamifar:TPAMI13} that when the subspaces are either independent or disjoint, and the data are noise free and well distributed, both \eqref{eq:L_0} and \eqref{eq:L_1} provide a sparse representation $\c_j$ that is \emph{subspace preserving}, as defined next.
	\begin{definition}[Subspace-preserving representation] 
		\label{def:subspace-preserving}
		A representation $\c\in\Re^N$ of a point $\x\in S_i$ in terms of the dictionary $X=\begin{bmatrix} \x_1 , \ldots, \x_N \end{bmatrix}$ is called subspace preserving if its nonzero entries correspond to points in $S_i$, \ie
		\begin{align}
		\forall j=1,\dots,N, \quad c_j \neq 0 \implies  \x_j \in S_i.
		\end{align}
		%
		%A representation $C\in\Re^{N\times N}$ of $X$ in terms of itself is called subspace preserving if each column of $C$ is, \ie if
		%%
		%\begin{equation}
		%c_{jk} = 0 \text{ if points } j \text{ and } k \text{ are in different subspaces}.
		%\end{equation} 
	\end{definition}
	
	%It is shown in \cite{Soltanolkotabi:AS12,Soltanolkotabi:AS14} that extensions of SSC-BP give a subspace preserving representation even when the subspaces are not disjoint and when the data is corrupted by noise or outliers. However, the main challenge is that one needs to solve a large scale $\ell_1$ minimization problem.

	In practice, however, solving $N$ $\ell_1$-minimization problems over $N$ variables may be prohibitive when $N$ is large. As an alternative, consider the following program:
	\begin{equation}
	\c_j^* = \argmin_{\c_j} \| \x_j - X \c_j\|_2^2 \st\|\c_j\|_0 \leq k, ~ c_{jj} = 0.
	\label{eq:OMP}
	\end{equation}
	It is shown in \cite{Tropp:TIT04,Davenport:TIT10} that, under certain conditions, this problem can be solved using the orthogonal matching pursuit (OMP) algorithm \cite{Pati:Asilomar93} (Algorithm~\ref{alg:OMP}). OMP solves the problem $\min_{\c} \|A\c-\b\|_2^2 \st \|\c\|_0 \leq k$ greedily by selecting one column of $A = \begin{bmatrix} \a_1, \dots, \a_M \end{bmatrix}$ at a time (the one that maximizes the absolute value of the dot product with the residual in line \ref{step:best-residual}) and computing the coefficients for the selected columns until $k$ columns are selected. For subspace clustering purposes, the vector $\c_j^*\in\Re^N$ (the $j$th column of $C^* \in \Re^{N\times N}$), is computed as \pagebreak 
	$\text{OMP}(X_{-j}, \x_j)\in \Re^{N-1}$ with a zero inserted in its $j$th entry, where $X_{-j}$ is the data matrix with the $j$th column removed. After $C^*$ is computed, the segmentation of the data is found by applying spectral clustering to the affinity matrix $W = |C^*|+|C^{*\top}|$ as done in SSC-BP. The procedure is summarized in Algorithm \ref{alg:SSC-OMP}.

	\renewcommand{\algorithmicrequire}{\textbf{Input:}}
	\renewcommand{\algorithmicensure}{\textbf{Output:}}
	\setlength{\textfloatsep}{8pt}% Remove \textfloatsep
	\begin{algorithm}[t]
		\caption{\bf : Orthogonal Matching Pursuit (OMP)}
		\label{alg:OMP}
		\begin{algorithmic}[1]
			%		\REQUIRE $\c^* = $ OMP $ (A, \b)$
			%		\STATE \textbf{Input}: $A = [\a_1, \dots, \a_M] \in \Re ^{m \times M}$, $\b \in \Re ^m$, $k_{\max}$, $\epsilon$.\!\!
			\REQUIRE $A = [\a_1, \dots, \a_M] \in \Re ^{m \times M}$, $\b \in \Re ^m$, $k_{\max}$, $\epsilon$.
			\STATE \label{step:initialize}\textbf{Initialize} $k = 0$, residual $\q_0 = \b$, support set $T_0 = \emptyset$.
			\WHILE {$k < k_{\max}$ and $\|\q _{k}\|_2 > \epsilon$}
			\STATE \label{step:best-residual}
			$T _{k+1} = T _{k} \bigcup \{i^*\}$, where
			$i^* = \argmax\limits_{i = 1, \ldots, M} |\a _i ^T \q _{k}|$\footnotemark[1]. 
			\STATE \label{step:residual}
			%$\q _{k+1} = (I - P_{T _{k+1}}) \b$, where $P_{T _{k+1}}$ is the projection onto the space spanned by columns indexed by $T_{k+1}$.\!\!\!
			$\q _{k+1} = (I - P_{T _{k+1}}) \b$, where $P_{T _{k+1}}$ is the projection onto the span of the vectors $\{\a_j, j \in T_{k+1} \}$.\!\!\!
			\STATE $k \leftarrow k+1$. 
			\ENDWHILE
			%		\STATE \label{step:output} $\c^* = \argmin _{ \c: \text{Supp}(\c) \subseteq T _{k} } \|\b - A\c\| _2$.
			\ENSURE \label{step:output} $\c^* = \argmin _{ \c: \text{Supp}(\c) \subseteq T _{k} } \|\b - A\c\| _2$.
			%		\ENSURE
		\end{algorithmic}
	\end{algorithm}

	\footnotetext[1]{If $\argmax$ in step \ref{step:best-residual} of the algorithm gives multiple items, pick one of them in a deterministic way, \eg, pick the one with the smallest index.}

	\begin{algorithm}[t]
		\caption{\bf : Sparse Subspace Clustering by Orthogonal Matching Pursuit (SSC-OMP)}
		\label{alg:SSC-OMP}
		\begin{algorithmic}[1]
			\REQUIRE Data $X = [\x_1, \cdots, \x_N]$, parameters $k_{\max}, \epsilon$.
			\STATE Compute $\c_j^*$ from $\text{OMP}(X_{-j},\x_j)$ using Algorithm \ref{alg:OMP}.
			%		\STATE Compute $C = [\c_1, \cdots, \c_N]$ from \eqref{eq:OMP} by Algorithm \ref{alg:OMP}.
			\STATE Set $C^* = [\c_1^*, \cdots, \c_N^*]$ and $W = |C^*|+|C^{*\transpose}|$.
			%In Algorithm 2, I would have for line 1 "Compute c_i = OMP(X_{-i},x_i) for all i = 1,...N using Algorithm 1".  I would then have for line 2 "Set C = [c_1 ... c_N] and then compute W = |C| + |C^T|."
			%		\STATE Compute $W = |C|+|C^\transpose|$.
			\STATE Compute segmentation from $W$ by spectral clustering.
			\ENSURE Segmentation of data $X$.
		\end{algorithmic}
	\end{algorithm}

	\section{Theoretical Analysis of SSC-OMP}
	\label{sec:main-results}
	
	OMP has been shown to be effective for sparse recovery, with the advantage over BP that it admits simple, fast implementations. However, note that existing conditions for the correctness of OMP for sparse recovery are too strong for the subspace clustering problem. In particular, note that the matrix $X$ need not satisfy the mutual incoherence \cite{Tropp:TIT04} or restricted isometry properties \cite{Davenport:TIT10}, as two points in a subspace could be arbitrarily close to each other. More importantly, these conditions are not applicable here because our goal is not to recover a \emph{unique} sparse solution. In fact, the sparse solution is not unique since any $d_i$ linearly independent points from $S_i$ can represent a point $\x_j \in S_i$. Therefore, there is a need to find conditions under which the output of OMP (which need not coincide with the solution of \eqref{eq:OMP} or \eqref{eq:L_0}) is guaranteed to be subspace preserving.
	
	This section is devoted to studying sufficient conditions under which SSC-OMP gives a subspace-preserving representation. Our analysis assumes that the data is noiseless. The termination parameters of Algorithm \ref{alg:OMP} are $\epsilon = 0$ and $k_{max}$  large enough (\eg, $k_{max} = M$). We also assume that the columns of $X$ are normalized to unit $\ell_2$ norm. To make our results consistent with state-of-the-art results, we first study the case where the subspaces are deterministic, including both independent subspaces as well as arbitrary subspaces. We then study the case where both the subspaces and the data points are drawn at random. 
	
	\subsection{Independent Deterministic Subspace Model}
	
	We first consider the case where the subspaces are fixed, the data points are fixed, and the subspaces are independent.
	
	\begin{definition}
		A collection of subspaces $\{S_i\}_{i=1}^n$ is called independent if $\dim\big(\sum_i S_i \big ) = \sum_i \dim(S_i)$, where $\sum_i S_i$ is defined as the subspace $\{ \sum_i \x_i: \x_i \in S_i \}$. %where $S_1+S_2$ is defined as the subspace $\{ \x_1 + \x_2 : \x_1 \in S_1 , \x_2 \in S_2\}$.
	\end{definition}
	
	Notice that two subspaces are \emph{independent} if and only if they
	are \emph{disjoint}, \ie, if they intersect only at the
	origin. However, pairwise disjoint subspaces need not be independent,
	\eg, three lines in $\Re^2$ are disjoint but not independent. Notice
	also that any subset of a set of independent subspaces is also
	independent. Therefore, any two subspaces in a set of independent
	subspaces are independent and hence disjoint. In particular, this
	implies that if $\{S_i\}_{i=1}^n$ are independent, then
	$S_i$ and $S_{(-i)} := \sum_{m \ne i} S_m$ are independent. 
	
	To establish conditions under which SSC-OMP gives a subspace-preserving affinity for independent subspaces, it is important to note that when computing $\OMP(X_{-j},\x_j)$, the goal is to select other points in the same subspace as $\x_j$. The process for selecting these points occurs in step \ref{step:best-residual} of Algorithm \ref{alg:OMP}, where the dot products between all points $\x_m$, $m\neq j$, and the current residual $\q_k$ are computed and the point with the highest product (in absolute value) is chosen. Since in the first iteration the residual is $\q_0 = \x_j$, we could immediately choose a point $\x_m$ in another subspace whenever the dot product of $\x_j$ with a point in another subspace is larger than the dot product of $\x_j$ with points in its own subspace. What the following theorem shows is that, even though OMP may select points in the wrong subspaces as the iterations proceed, the coefficients associated to points in other subspaces will be zero at the end. 
	%What is interesting about the proof of Theorem \ref{thm:SSC-OMP-independent}, which is provided in the appendix, is that, even though OMP may select points in the wrong subspaces as the iterations proceed, the final coefficient $c_{mj}$ associated to a point $\x_m$ in other subspaces will be zero. 
	Therefore, OMP (with $\epsilon=0$ and $k_{\max}=N-1$) is guaranteed to find a subspace-preserving representation.
	\begin{theorem}
		\label{thm:SSC-OMP-independent}
		If the subspaces are independent, OMP gives a subspace-preserving representation of each data point.
	\end{theorem}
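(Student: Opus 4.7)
The key structural fact about independent subspaces is that $S_i \cap S_{(-i)} = \{0\}$, so every vector in $\sum_m S_m$ has a \emph{unique} decomposition into components from each subspace. My plan is to run OMP to termination, analyze the final representation, and split it along this direct-sum decomposition to force the ``wrong-subspace'' coefficients to vanish.

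First I would establish a standard structural property of OMP: the columns of $X_{-j}$ indexed by the final support $T$ are linearly independent. At iteration $k+1$, the residual $\q_k = (I - P_{T_k})\x_j$ is orthogonal to $\mathrm{span}\{\a_m : m \in T_k\}$, so any newly selected column $\a_{i^*}$, which by the stopping rule satisfies $|\a_{i^*}^\top \q_k| > 0$, cannot already lie in that span. Combined with the mild assumption that $\x_j$ lies in $\mathrm{span}(X_{-j})$ (i.e.\ there are enough other samples in $S_i$ to represent $\x_j$), this also implies that OMP terminates with $\q = 0$: otherwise $\q_k \in \mathrm{span}(X_{-j})$ would be simultaneously orthogonal to every column, forcing $\q_k = 0$. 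Thus the output $\c^*$ satisfies $\x_j = X_T \c^*$ with $X_T$ having linearly independent columns.

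Next I would partition the support as $T = T_i \sqcup T_{(-i)}$, where $T_i$ indexes the selected points that lie in $S_i$. Rearranging the identity $\x_j = X_{T_i}\c^*_{T_i} + X_{T_{(-i)}}\c^*_{T_{(-i)}}$ gives
\begin{equation*}
\x_j - X_{T_i}\c^*_{T_i} \;=\; X_{T_{(-i)}}\c^*_{T_{(-i)}}.
\end{equation*}
The left-hand side lies in $S_i$ and the right-hand side lies in $S_{(-i)}$, so independence forces both to be $\mathbf{0}$. Finally, since $X_{T_{(-i)}}$ is a column-submatrix of the full-column-rank matrix $X_T$, its columns are themselves linearly independent, and $X_{T_{(-i)}}\c^*_{T_{(-i)}} = \mathbf{0}$ yields $\c^*_{T_{(-i)}} = \mathbf{0}$, which is precisely the subspace-preserving property.

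The main subtlety, and the only step I would be careful about, is the linear-independence claim for the selected columns: it is exactly what upgrades the geometric conclusion $X_{T_{(-i)}}\c^*_{T_{(-i)}} = \mathbf{0}$ into the algebraic conclusion $\c^*_{T_{(-i)}} = \mathbf{0}$. Note that the argument explicitly allows OMP to pick points from other subspaces along the way (as the introductory paragraph preceding the theorem emphasizes): what the direct-sum decomposition guarantees is that any such choices must carry zero coefficient in the final least-squares solve.
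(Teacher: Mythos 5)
Your proposal is correct and follows essentially the same route as the paper's own proof: show the selected columns are linearly independent and that OMP (with $\epsilon=0$ and large $k_{\max}$, plus the "enough samples" assumption) terminates with zero residual, split the exact representation $\x_j = X_T\c^*$ along the direct sum of $S_i$ and $S_{(-i)}$ so independence forces both parts to zero, and then use the linear independence of the selected columns to conclude the wrong-subspace coefficients vanish. The only cosmetic difference is that the paper states the sample-sufficiency assumption as $\rank(X_{-j}^i)=\dim(S_i)$ in a footnote, whereas you phrase it as $\x_j\in\mathrm{span}(X_{-j})$; the argument is otherwise identical.
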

	\begin{proof}{[Sketch only]}
		Assume that $\x_j \in S_i$. Since $\epsilon=0$ and $k_{\max}$ is large, OMP gives an exact representation, \ie, $\x_j = X \c_j$ and $c_{jj}=0$. Thus, since $S_i$ and $S_{(-i)}$ are independent, the coefficients of data points in $S_{(-i)}$ must~be~zero.
	\end{proof}

	\subsection{Arbitrary Deterministic Subspace Model}
	
	We will now consider a more general class of subspaces, which need not be independent or disjoint, and investigate conditions under which OMP gives a subspace-preserving representation. In the following, $X^{i} \in \Re^{D\times N_i}$ denotes the submatrix of $X$ containing the points in the $i$th subspace; for any $\x_j \in
	S_i$, $X_{-j}^i\in\Re^{D\times (N_i-1)}$ denotes the matrix $X^{i}$ with the point $\x_j$ removed; $\cX^{i}$ and $\cX_{-j}^i$ denote respectively the set of vectors contained in the columns of $X^{i}$ and $X_{-j}^i$.
	
	Now, it is easy to see that a sufficient condition for $\OMP(X_{-j}, \x_j)$ to be subspace preserving is that for each $k$ in step \ref{step:best-residual} of Algorithm \ref{alg:OMP}, the point that maximizes the dot product lies in the same subspace as $\x_j$. Since $\q_0 = \x_j$ and $\q_1$ is equal to $\x_j$ minus the projection of $\x_j$ onto the subspace spanned by the selected point, say $\hat{\x}$, it follows that if $\x_j,\hat{\x} \in S_i$ then $\q_1\in S_i$. By a simple induction argument, it follows that if all the selected points are in $S_i$, then so are the residuals $\{\q_k\}$. This suggests that the condition for $\OMP(X_{-j}, \x_j)$ to be subspace preserving must depend on the dot products between the data points and a subset of the set of residuals (the subset contained in the same subspace as $\x_j$). This motivates the following definition and lemma.

	\begin{definition}
		\label{def:residual-direction}
		Let $Q(A, \b)$ be the set of all residual vectors computed in step \ref{step:residual} of $\OMP(A, \b)$. The set of OMP residual directions associated with matrix $X _{-j}^{i}$ and point $\x_j \!\in\! S_i$ is defined as:
		\begin{equation}
		\cW_j^i := \{\w = \frac{\q}{\|\q\|_2}: \q \in Q(X _{-j}^i, \x_j), \q \ne \0\}.
		\end{equation}
		The set of OMP residual directions associated with the data matrix $X^i$ is defined as $\cW^i := \bigcup_{j: \x_j \in S_i} \cW_j^i$.
	\end{definition}
	
	%\begin{definition}
	%	\label{def:residual-direction}
	%	Let $Q(A, \b)$ be the set of all residual vectors computed in step \ref{step:residual} of $\OMP(A, \b)$. Assume that $\x_j \in S_i$ and let $X^{i} \in \Re^{D\times N_i}$ be the submatrix of $X$ containing the points in the $i$th subspace. Also, let $X_{-j}^i\in\Re^{D\times (N_i-1)}$ be the matrix $X^{i}$ except for point $\x_j$. The set of OMP residual directions associated with matrix $X _{-j}^{i}$ and point $\x_j \!\in\! S_i$ is defined as:
	%	%
	%	\begin{equation}
	%		\cW_j^i := \{\w = \frac{\q}{\|\q\|_2}: \q \in Q(X _{-j}^i, \x_j), \q \ne \0\}.
	%	\end{equation}
	%	%
	%	The set of OMP residual directions associated with the data matrix $X^i$ is defined as $\cW^i := \cup_{j: \x_j \in S_i} \cW_j^i$.
	%\end{definition}

	\begin{lemma}
		\label{lem:ss_individual}
		OMP gives a subspace-preserving representation for point $\x_j \in S_i$ in at most $d_i$ iterations if 
		%	\[
		%		\|X^{(-l) \transpose} w\|_\infty < \|X_{-i}^{(l) \transpose} w\|_\infty, \forall w \in \cW_i^{(l)}.
		%	\]
		\begin{equation}
		\label{eqn:ss_individual}
		\forall \w \in \cW_j^i
		\quad
		%		\max_{k: \x_k\not\in S_i} |\w^\transpose \x_k| <  \max_{\substack{k: \x_k \in S_i, k \neq j}} |\w^\transpose \x_k | 
		\max_{\x \in \bigcup_{k \ne i} \cX^k} | \w^\transpose \x | < \max_{\x \in \cX^i \backslash \{\x_j\}}|\w^\transpose \x | .
		\end{equation}
	\end{lemma}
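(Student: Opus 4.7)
The plan is an induction on the OMP iteration counter $k$, showing simultaneously that (a) the residual $\q_k$ produced while running $\OMP(X_{-j},\x_j)$ stays in $S_i$, and (b) the column selected at step \ref{step:best-residual} of Algorithm~\ref{alg:OMP} belongs to $\cX^i_{-j}$. Together these imply that the execution of OMP on the full matrix $X_{-j}$ coincides step-by-step with the hypothetical execution on $X^i_{-j}$, so the normalized residuals seen by the algorithm are precisely the elements of $\cW_j^i$, and hypothesis \eqref{eqn:ss_individual} can legitimately be applied at every iteration.

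For the base case $k=0$, the residual $\q_0=\x_j\in S_i$ and the corresponding direction $\w_0=\x_j/\|\x_j\|_2$ lies in $\cW_j^i$, so \eqref{eqn:ss_individual} forces the $\argmax$ of $|\x_m^\transpose\q_0|$ over $\x_m$ in $X_{-j}$ to be attained inside $\cX^i_{-j}$. For the inductive step, if the current support $T_k\subseteq\cX^i_{-j}$, then $P_{T_k}$ projects onto a subspace of $S_i$, so $\q_k=(I-P_{T_k})\x_j$ again lies in $S_i$. If $\q_k=\0$ the algorithm terminates with output supported on $T_k\subseteq\cX^i_{-j}$, which is subspace preserving; otherwise $\w_k=\q_k/\|\q_k\|_2\in\cW_j^i$, and \eqref{eqn:ss_individual} again forces the next selection into $\cX^i_{-j}$. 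To obtain the iteration bound of $d_i$, I would observe that $\q_k\perp\mathrm{span}(T_k)$ while \eqref{eqn:ss_individual} makes the chosen dot product strictly positive, so every newly added column is linearly independent of $T_k$; after at most $d_i$ such additions $\mathrm{span}(T_k)=S_i$, and combined with $\q_k\in S_i$ this forces $\q_k=\0$.

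The only mildly delicate point, and the reason the induction is needed, is that $\cW_j^i$ is defined via $\OMP(X^i_{-j},\x_j)$, whereas the algorithm actually runs $\OMP(X_{-j},\x_j)$ on the full dictionary; the coupling argument above is exactly what reconciles the two. Everything else is routine unwinding of the OMP recursion together with the elementary geometry of orthogonal projections, so I expect the write-up to be short once the induction framework is in place.
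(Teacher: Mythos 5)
Your proposal is correct and follows essentially the same route as the paper: an induction that couples the actual run of $\OMP(X_{-j},\x_j)$ with the fictitious run of $\OMP(X^i_{-j},\x_j)$, using \eqref{eqn:ss_individual} to force each selected column into $\cX^i_{-j}$ so the residual directions remain in $\cW_j^i$, and then bounding the number of iterations by $d_i$ via the linear independence of the selected columns within $S_i$ (the paper cites this independence from its proof of Theorem \ref{thm:SSC-OMP-independent}, while you rederive it from $\q_k\perp\mathrm{span}(T_k)$ and the strict inequality making the winning dot product nonzero). No substantive difference in approach or gaps.
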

	\begin{proof}{[Sketch only]}
		By using an induction argument, it is easy to see that the condition in \eqref{eqn:ss_individual} implies that the sequence of residuals of $\OMP(X_{-j}, \x_j)$ is the same as that of the fictitious problem $\OMP(X_{-j}^i, \x_j)$. Hence, the output of $\OMP(X_{-j}, \x_j)$ is the same as that of $\OMP(X_{-j}^i, \x_j)$, which is, by construction, subspace-preserving.
	\end{proof}

	Intuitively, Lemma \ref{lem:ss_individual} tells us that if the dot product between the residual directions for subspace $i$ and the data points in all other subspaces is smaller than the dot product between the residual directions for subspace $i$ and all points in subspace $i$ other than $\x_j \in S_i$, then OMP gives a subspace-preserving representation. While such a condition is very intuitive from the perspective of OMP, it is not as intuitive from the perspective of subspace clustering as it does not rely on the geometry of the problem. Specifically, it does not directly depend on the relative configuration of the subspaces or the distribution of the data in the subspaces. In what follows, we derive conditions on the subspaces and the data that guarantee that the condition in \eqref{eqn:ss_individual} holds. Before doing so, we need some additional definitions.
	
	\begin{definition}
		\label{def:coherence}
		The coherence between two sets of points of unit norm, $\cX$ and $\cY$, is defined as
		$%\begin{equation}
		%	\mu(\cX, \cY) = \max_{\x \in \cX} \max_{\y \in \cY} |\langle \x, \y \rangle|.
		\mu(\cX, \cY) = \max_{\x \in \cX, \y \in \cY} |\langle \x, \y \rangle|.
		$%\end{equation}
	\end{definition}
	
	The coherence measures the degree of ``similarity" between two sets of
	points. In our case, we can see that the left hand side of
	\eqref{eqn:ss_individual} is bounded above by the coherence between
	the sets $\cW^i$ and $\bigcup_{k \ne i}\cX^{k}$. As per
	\eqref{eqn:ss_individual}, this coherence should be small, which
	implies that data points from different subspaces should be
	sufficiently separated (in angle).
	
	\begin{definition}
		The inradius $r(\cP)$ of a convex body $\cP$ is the radius of the largest Euclidean ball inscribed in $\cP$.
	\end{definition}
	
	As shown in Lemma \ref{lem:chain-inequalities}, the right hand side of
	\eqref{eqn:ss_individual} is bounded below by  $r(\cP_{-j}^i)$, where
	$\cP_{-j}^i := \conv\big(\pm \cX_{-j}^i\big)$ is the symmetrized
	convex hull of the points in the $i$th subspace other than $\x_j$,
	\ie, $\cX_{-j}^i$. Therefore, \eqref{eqn:ss_individual} suggests that
	the minimum inradius $r_i := \min_{j} r(\cP_{-j}^i)$ should be large,
	which means the points in $S_i$ should be well-distributed.
	
	%Lemma \ref{lem:chain-inequalities}, whose proof can be found in the supplementary materials, allows us to bound both sides of the condition in \eqref{eqn:ss_individual} in terms of the coherence and inradius.  
	
	\begin{lemma}
		\label{lem:chain-inequalities}
		Let $\x_j \in S_i$. Then, for all $\w \in \cW_j^i$, we have:
		\begin{align}
		%&\max_{k: \x_k\not\in S_i} \!\! |\w^\transpose \x_k| 
		&\max_{\x \in \bigcup_{k \ne i} \!\cX^k} \!\! |\w^\transpose \x|
		\le \!\max_{k: k \ne i} \mu(\cW^i, \cX^k)
		\le \!\max_{k: k \ne i} \mu(\cX^i, \cX^k) / r_i;\nonumber\\
		%&\max_{k: \x_k \in S_i, k \neq j} |\w^\transpose \x_k |
		&\max_{\x \in \cX^i \backslash \{\x_j\}} |\w ^\transpose \x|
		\ge r(\cP_{-j}^i) \ge r_i.
		\end{align}
	\end{lemma}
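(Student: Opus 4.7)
The plan rests on one structural fact that drives both chains: every normalized residual $\w \in \cW^i$ is a unit vector lying inside $S_i$. I would establish this first by a short induction on the iterations of $\OMP(X_{-j}^i, \x_j)$ for $\x_j \in S_i$: the initial residual $\q_0 = \x_j$ lies in $S_i$, and each subsequent residual is obtained from the previous one by subtracting its projection onto the span of vectors drawn from $\cX_{-j}^i \subset S_i$, so the residual stays in $S_i$; normalization preserves this.

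For the first chain, the inequality $\max_{\x \in \bigcup_{k \ne i}\cX^k} |\w^\transpose \x| \le \max_{k \ne i}\mu(\cW^i,\cX^k)$ is immediate from Definition \ref{def:coherence} since $\w \in \cW^i$. The harder bound $\mu(\cW^i,\cX^k) \le \mu(\cX^i,\cX^k)/r_i$ I would obtain as follows. Fix $\w \in \cW^i$; by definition $\w \in \cW_{j'}^i$ for some $j'$ with $\x_{j'} \in S_i$. Because $\w$ is a unit vector in $S_i$ and $\cP_{-j'}^i \subset S_i$ contains a Euclidean ball (within $S_i$) of radius $r(\cP_{-j'}^i)$ centered at the origin, the scaled vector $r(\cP_{-j'}^i)\,\w$ lies in $\cP_{-j'}^i = \conv(\pm\cX_{-j'}^i)$. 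This yields a representation $\w = \sum_{l}\beta_l \x_l$ with $\x_l \in \cX_{-j'}^i$ and $\sum_l |\beta_l| \le 1/r(\cP_{-j'}^i)$. Then for any $\x \in \cX^k$ with $k \ne i$, the triangle inequality gives $|\w^\transpose \x| \le \bigl(\sum_l |\beta_l|\bigr)\,\mu(\cX^i,\cX^k) \le \mu(\cX^i,\cX^k)/r_i$. Taking the supremum over $\w \in \cW^i$ and $\x \in \cX^k$ closes the chain.

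For the second chain, $r(\cP_{-j}^i) \ge r_i$ is immediate from the definition $r_i := \min_{j} r(\cP_{-j}^i)$. For the remaining inequality I would use the support-function identity: since $\cP_{-j}^i$ is the symmetrized convex hull of $\cX_{-j}^i$, one has $\max_{\y \in \cP_{-j}^i}\w^\transpose \y = \max_{\x \in \cX^i \setminus \{\x_j\}}|\w^\transpose \x|$. Because $\w$ is a unit vector in $S_i$, the vector $r(\cP_{-j}^i)\,\w$ lies in $\cP_{-j}^i$, so the maximum on the left is at least $r(\cP_{-j}^i)$, giving the bound.

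I expect the only subtle point to be the inradius-representation step used for the second inequality of the first chain: one must remember that $r(\cP_{-j'}^i)$ is an \emph{intrinsic} inradius computed inside the subspace $S_i$ (the body is degenerate in $\Re^D$), so the ball-in-$\cP_{-j'}^i$ argument only applies to vectors inside $S_i$ — which is precisely why the opening observation that $\w \in S_i$ is indispensable. The rest is routine manipulation of coherence and support functions.
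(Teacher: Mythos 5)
Your proof is correct, and while its overall skeleton matches the paper's (bound the left-hand side by coherence plus an $\ell_1$-type bound on a representation of $\w$ in terms of $\cX^i$, and bound the right-hand side by the inradius), you execute the two nontrivial inequalities differently. The paper obtains the bound $\|\hat\c\|_1 \le 1/r(\cP^i)$ for a minimum-$\ell_1$ representation by citing Lemma B.2 of \cite{Soltanolkotabi:AS14}, and proves $\max_{\x \in \cX^i\setminus\{\x_j\}}|\w^\transpose\x| \ge r(\cP_{-j}^i)$ by contradiction via the dual (circumradius-of-the-polar) characterization from \cite{Soltanolkotabi:AS12}. You instead derive both from one elementary geometric fact: since $\cP_{-j}^i=\conv(\pm\cX_{-j}^i)$ is symmetric, its inscribed ball (inside $S_i$) may be taken centered at the origin, so $r(\cP_{-j}^i)\,\w \in \cP_{-j}^i$ for any unit $\w \in S_i$; this yields a representation of $\w$ with coefficient $\ell_1$-norm at most $1/r(\cP_{-j}^i)\le 1/r_i$ for the coherence chain, and, via the support-function identity $\max_{\y\in\cP_{-j}^i}\w^\transpose\y=\max_{\x\in\cX^i\setminus\{\x_j\}}|\w^\transpose\x|$, gives the lower bound directly without contradiction. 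Your route buys a self-contained proof (the cited lemmas are essentially this duality anyway); the paper's buys brevity by outsourcing. Two small points to make explicit if you write it up: the origin-centered-ball claim should be justified by the central symmetry of $\cP_{-j}^i$ (if $B(c,r)\subset\cP$ then also $B(-c,r)\subset\cP$, hence $B(0,r)\subset\cP$ by convexity), and the argument needs $\cX_{-j}^i$ to span $S_i$ so that the intrinsic inradius is positive and the ball is full-dimensional in $S_i$ --- this is exactly the paper's footnote assumption $\rank(X_{-j}^i)=\dim(S_i)$, and you correctly flag the intrinsic-inradius subtlety.
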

	\begin{proof}
		The proof can be found in the Appendix. 
	\end{proof}
	
	Lemma \ref{lem:chain-inequalities} allows us to make the condition of Lemma \ref{lem:ss_individual} more interpretable, as stated in the following theorem.
	
	\begin{theorem}
		\label{thm:deterministic_1}
		The output of OMP is subspace preserving if
		%OMP gives a subspace-preserving representation if
		%
		\begin{equation}
		\forall i=1,\dots,n,\quad
		\max_{k: k \ne i} \mu(\cW^i,\cX ^k) < r_i.
		\label{eq:result_1}
		\end{equation} 
	\end{theorem}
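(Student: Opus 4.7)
The plan is to observe that Theorem \ref{thm:deterministic_1} follows by chaining Lemma \ref{lem:ss_individual} with Lemma \ref{lem:chain-inequalities}, so essentially no new work is required beyond correctly quantifying over points and subspaces. The strategy is to fix an arbitrary data point $\x_j \in S_i$ and verify that the hypothesis of Lemma \ref{lem:ss_individual} holds for $\x_j$ under the assumption \eqref{eq:result_1}; since $\x_j$ was arbitrary, every column of $C^*$ will be subspace preserving.

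First I would fix $i \in \{1,\dots,n\}$ and $\x_j \in S_i$, and pick any $\w \in \cW_j^i$. By Definition \ref{def:residual-direction} we have $\cW_j^i \subseteq \cW^i$, so $\w \in \cW^i$. Applying the first chain of inequalities in Lemma \ref{lem:chain-inequalities} yields
\begin{equation*}
\max_{\x \in \bigcup_{k \ne i} \cX^k} |\w^\transpose \x| \;\le\; \max_{k: k \ne i} \mu(\cW^i, \cX^k),
\end{equation*}
and the second inequality in Lemma \ref{lem:chain-inequalities} gives
\begin{equation*}
\max_{\x \in \cX^i \setminus \{\x_j\}} |\w^\transpose \x| \;\ge\; r(\cP_{-j}^i) \;\ge\; r_i.
\end{equation*}

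Combining these two bounds with the standing assumption \eqref{eq:result_1}, namely $\max_{k \ne i} \mu(\cW^i, \cX^k) < r_i$, produces
\begin{equation*}
\max_{\x \in \bigcup_{k \ne i} \cX^k} |\w^\transpose \x| \;<\; \max_{\x \in \cX^i \setminus \{\x_j\}} |\w^\transpose \x|
\end{equation*}
for every $\w \in \cW_j^i$. This is precisely condition \eqref{eqn:ss_individual} of Lemma \ref{lem:ss_individual}, so OMP returns a subspace-preserving representation of $\x_j$ in at most $d_i$ iterations. Since $\x_j$ and $i$ were arbitrary, the entire coefficient matrix $C^*$ produced by Algorithm \ref{alg:SSC-OMP} is subspace preserving.

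The only potential subtlety, and the one point I would be careful about when writing it out, is the quantifier handling: Lemma \ref{lem:chain-inequalities} is stated per residual direction $\w \in \cW_j^i$, while the hypothesis \eqref{eq:result_1} involves the coherence with the full set $\cW^i$. The inclusion $\cW_j^i \subseteq \cW^i$ is what makes the substitution legal, and it is worth spelling out rather than leaving implicit. Beyond that, the argument is entirely a bookkeeping exercise.
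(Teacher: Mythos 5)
Your proposal is correct and follows exactly the paper's own route: the paper likewise proves Theorem \ref{thm:deterministic_1} (and Corollary \ref{thm:deterministic_2}) by invoking Lemma \ref{lem:ss_individual} and then using Lemma \ref{lem:chain-inequalities} to bound both sides of condition \eqref{eqn:ss_individual}. Your explicit note on the inclusion $\cW_j^i \subseteq \cW^i$ is a fine piece of bookkeeping that the paper leaves implicit, but it does not change the argument.
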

	
	\begin{corollary}
		\label{thm:deterministic_2}
		The output of OMP is subspace preserving if
		%OMP gives a subspace-preserving representation if
		\begin{equation}
		\forall i=1,\dots,n,\quad
		\max_{k: k \ne i} \mu(\cX^i, \cX ^k) < r_i ^2.
		\label{eq:result_2}
		\end{equation} 
	\end{corollary}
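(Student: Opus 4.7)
The plan is to derive this corollary as an immediate consequence of Theorem~\ref{thm:deterministic_1} by substituting the second chain inequality from Lemma~\ref{lem:chain-inequalities}. The hypothesis of Theorem~\ref{thm:deterministic_1} is phrased in terms of the coherence between the residual direction set $\cW^i$ and the data sets $\cX^k$ for $k \ne i$, which is hard to check directly since $\cW^i$ depends on the intermediate state of OMP. The point of the corollary is to replace $\cW^i$ with $\cX^i$, at the cost of a factor of $r_i$ on the right-hand side, so that the hypothesis becomes verifiable purely from the geometry of the input data.

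First, I would observe that if $r_i = 0$ for some $i$ then the hypothesis $\max_{k \ne i}\mu(\cX^i, \cX^k) < r_i^2 = 0$ cannot hold (coherence is nonnegative), so we may assume $r_i > 0$ for every $i$. Next, I would invoke the second inequality of Lemma~\ref{lem:chain-inequalities}, namely
\begin{equation*}
\max_{k: k \ne i} \mu(\cW^i, \cX^k) \le \max_{k: k \ne i} \mu(\cX^i, \cX^k) / r_i,
\end{equation*}
and divide the hypothesis $\max_{k \ne i} \mu(\cX^i, \cX^k) < r_i^2$ through by $r_i > 0$ to obtain $\max_{k \ne i} \mu(\cX^i, \cX^k)/r_i < r_i$. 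Chaining these two facts yields $\max_{k \ne i} \mu(\cW^i, \cX^k) < r_i$ for every $i$, which is exactly the hypothesis of Theorem~\ref{thm:deterministic_1}. Applying that theorem concludes that the output of OMP is subspace preserving.

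There is no real obstacle here, because all the substantive work has been done in Lemma~\ref{lem:chain-inequalities} (where $\cW^i$ is controlled in terms of $\cX^i$ via the inradius) and in Theorem~\ref{thm:deterministic_1}. The only care needed is ensuring that the factor $r_i$ on the right of the inequality is strictly positive so that the division step is legitimate, which as noted above is automatic from the hypothesis. The proof is thus essentially a one-line deduction, and its value is purely expository: it replaces a condition involving the (algorithm-dependent) residual directions with a condition involving only the coherence between the data in different subspaces and the inradii $r_i$ of the symmetrized convex hulls, quantities that are intrinsic to the data.
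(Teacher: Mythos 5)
Your proof is correct and follows essentially the same route as the paper: the paper derives both Theorem~\ref{thm:deterministic_1} and Corollary~\ref{thm:deterministic_2} directly from Lemma~\ref{lem:ss_individual} together with the bounds in Lemma~\ref{lem:chain-inequalities}, which is exactly the chain $\mu(\cW^i,\cX^k) \le \mu(\cX^i,\cX^k)/r_i < r_i$ you use (routed explicitly through Theorem~\ref{thm:deterministic_1}). Your added remark that the hypothesis forces $r_i>0$, making the division legitimate, is a small but valid point of care that the paper leaves implicit.
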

	
	Note that points in $\cW ^i$ are all in subspace $S_i$, as step \ref{step:residual} of $\OMP(A:=X_{-j} ^i, \b:=\x_j)$ has $\b$ and $P_{T_{k+1}} \b$ both in $S_i$. The conditions \eqref{eq:result_1} and \eqref{eq:result_2} thus show that for each subspace $S_i$, a set of points (\ie, $\cX^i$ or $\cW^i$) in $S_i$ should have low coherence with all points from other subspaces, and that points in $\cX^i$ should be uniformly located in $S_i$ to have a large inradius. This is in agreement with the intuition that points from different subspaces should be well separated, and points within a subspace should be well distributed.

	For a comparison of Corollary \ref{thm:deterministic_2} and Theorem \ref{thm:deterministic_1}, note that due to Lemma \ref{lem:chain-inequalities} condition  \eqref{eq:result_1} is tighter than condition \eqref{eq:result_2}, making Theorem \ref{thm:deterministic_1} preferable. Yet Corollary \ref{thm:deterministic_2} has the advantage that both sides of condition \eqref{eq:result_2} depend directly on the data points in $\cX$, while condition \eqref{eq:result_1} depends on the residual points in $\cW ^i$, making it algorithm specific.
	
	Another important thing to notice is that  conditions \eqref{eq:result_1} and \eqref{eq:result_2} can be satisfied even if the subspaces are \emph{neither independent nor disjoint}. For example, consider the case where $S_i\bigcap S_k \neq \0$. Then, the coherence $\mu(\cW^i, \cX ^k)$ could still be small as long as no points in $\cW^i$ and $\cX ^k$ are near the intersection of $S_i$ and $S_k$. Actually, even this is too strong of an assumption since the intersection is a subspace. Thus, $\x \in \cX ^k$, $\y \in \cW ^i$ could both be very close to the intersection yet have low coherence. The same argument also works for condition \eqref{eq:result_2}. Admittedly, under specific distributions of points, it is possible that there exists $\x \in \cX ^k$ and $\y \in \cW ^i$ that are arbitrarily close to each other when they are near the intersection. However, this worst case scenario is unlikely to happen if we consider a random model, as discussed next.
	
	\subsection{Arbitrary Random Subspace Model}
	\label{sec:main-results-random}
	This section considers the fully random union of subspaces model in \cite{Soltanolkotabi:AS13}, where the basis elements of each subspace are chosen uniformly at random from the unit sphere of the ambient space and the data points from each subspace are uniformly distributed on the unit sphere of that subspace. Theorem \ref{thm:random1} shows that the sufficient condition in \eqref{eq:result_1} holds true with high probability (\ie the probability goes to 1 as the density of points grows to infinity) given some conditions on the subspace dimension $d$, the ambient space dimension $D$, the number of subspaces $n$ and the number of data points per subspace.
	
	\begin{theorem}
		\label{thm:random1}
		Assume a random union of subspaces model where all subspaces are of equal dimension $d$ and the number of data points in each subspace is $\rho d + 1$, where $\rho > 1$ is the ``density", so that the total number data points in all subspaces is  $N(n, \rho, d) = n  (\rho d + 1)$. The output of OMP is subspace preserving with probability $p > 1 - \frac{2d}{N(n, \rho, d) } - N(n, \rho, d)  e^ {-\sqrt{\rho} d}$ if
		\begin{equation}
		d < \frac{c^2(\rho) \log \rho}{12} \frac{D}{\log N(n, \rho, d) },
		\label{eq:result_rnd_1}
		\end{equation}
		where $c(\rho) > 0$ is a constant that depends only on $\rho$. 
		%then OMP gives a subspace-preserving representation with probability at least $1 - \frac{2d}{N} - N e^ {-\sqrt{\rho} d}$, where $N = n  (\rho d + 1)$ is the total number data points in all subspaces. 
	\end{theorem}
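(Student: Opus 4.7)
The plan is to verify, with high probability, the sufficient condition \eqref{eq:result_1} of Theorem \ref{thm:deterministic_1}, namely $\max_{k\neq i}\mu(\cW^i,\cX^k) < r_i$ for every $i$. This reduces to (i) a uniform lower bound on the inradii $r_i$ and (ii) a uniform upper bound on the cross-coherences $\mu(\cW^i, \cX^k)$, each holding with high probability across all $n$ subspaces; the probability bookkeeping then produces the stated $p$, and the inequality obtained by combining the two bounds produces the stated dimension condition \eqref{eq:result_rnd_1}.

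\textbf{Step 1 (inradii).} Since $\cX^i$ consists of $\rho d + 1$ i.i.d.\ uniform points on the unit sphere of the random subspace $S_i$, a standard concentration result for the inradius of the symmetrized convex hull of uniform spherical samples in dimension $d$ (see \cite{Soltanolkotabi:AS13}) yields, for a constant $c(\rho)>0$ depending only on $\rho>1$, the bound $r(\cP_{-j}^i) \ge c(\rho)\sqrt{\log\rho/(2d)}$ with probability at least $1 - e^{-\sqrt{\rho}d}$ for each fixed $j$. A union bound over the $N(n,\rho,d)$ data points then gives $r_i \ge c(\rho)\sqrt{\log\rho/(2d)}$ uniformly in $i$ with failure probability at most $N(n,\rho,d)\,e^{-\sqrt{\rho}d}$.

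\textbf{Step 2 (coherences).} Fix $i$ and condition on $(S_i,\cX^i)$. Then $\cW^i$ is a deterministic subset of $S_i$, and because $\OMP(X_{-j}^i,\x_j)$ runs for at most $d$ iterations per seed $\x_j\in\cX^i$, we have the crucial cardinality bound $|\cW^i|\le d(\rho d+1)$. Conditionally, the remaining subspaces $\{S_k\}_{k\ne i}$ and their data $\{\cX^k\}_{k\ne i}$ are independent of $\cW^i$, so for every fixed $\w\in\cW^i$ and every $\x\in\cX^k$ the vector $\x$ is uniform on the unit sphere of $\Re^D$. Applying the spherical concentration bound $\Pr[|\langle \w,\x\rangle|>\sqrt{6\log N/D}]\le 2/N^3$ to each of the at most $dN^2$ pairs $(\w,\x)$ obtained by varying $i$, $\w\in\cW^i$, $k\ne i$, and $\x\in\cX^k$, a union bound produces $\max_{i}\max_{k\ne i}\mu(\cW^i,\cX^k) \le \sqrt{6\log N/D}$ with failure probability at most $2d/N$.

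\textbf{Step 3 (combine).} On the intersection of the two events of Steps 1 and 2, the sufficient condition \eqref{eq:result_1} holds as soon as $\sqrt{6\log N/D} < c(\rho)\sqrt{\log\rho/(2d)}$, which rearranges exactly to the hypothesis \eqref{eq:result_rnd_1}. The total failure probability is at most $N(n,\rho,d)\,e^{-\sqrt{\rho}d} + 2d/N(n,\rho,d)$, matching the claimed lower bound on $p$.

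\textbf{Main obstacle.} The delicate step is Step 2: $\cW^i$ is not a fixed set of unit vectors but is generated by running OMP on random data in $S_i$, so naive spherical concentration for a single deterministic vector does not directly apply. The resolution is to exploit the independence structure of the random model by conditioning on $(S_i,\cX^i)$ to freeze $\cW^i$ while leaving $\{(S_k,\cX^k)\}_{k\ne i}$ freshly random, combined with the deterministic cardinality bound $|\cW^i|\le d(\rho d+1)$ coming from the at-most-$d$ iteration count of OMP in a $d$-dimensional subspace; this keeps the union bound over pairs of size only $O(dN^2)$, small enough to be absorbed by the tail $2/N^3$.
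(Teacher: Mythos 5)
Your proposal is correct and follows essentially the same route as the paper's proof: verify the deterministic condition of Theorem \ref{thm:deterministic_1} with high probability by combining the inradius lower bound $r_i \ge c(\rho)\sqrt{\log\rho/(2d)}$ from \cite{Soltanolkotabi:AS12} with the spherical-cap coherence bound $\sqrt{6\log N/D}$, a union bound over at most $dN^2$ residual--point pairs, and the same probability bookkeeping. The only difference is cosmetic: where the paper asserts that each $\w\in\cW^i$ is itself uniformly distributed and independent of $\cX^k$, you condition on $(S_i,\cX^i)$ to freeze $\cW^i$, which is a slightly more careful justification of the same independence step.
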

	
	%Comment: the following corollary is commented out since it doesn't give any new information in addition to the above theorem.
	%
	%Similarly, the condition (\ref{eq:result_1}) could be satisfied with high probability, as below:
	%
	%\begin{corollary}
	%	Under the random model, if
	%	\begin{equation}
	%	d < \frac{c^2(\rho) \log \rho}{2 \sqrt{6}} \sqrt{\frac{D}{\log N}}
	%	\label{eq:result_rnd_2}
	%	\end{equation}
	%	then OMP gives subspace-preserving solution with probability at least $1 - \frac{2}{N} - N e^ {-\sqrt{\rho} d}$.
	%	\label{thm:random2}
	%\end{corollary}
	
	One interpretation of the condition in \eqref{eq:result_rnd_1} is that the dimension $d$ of the subspaces should be small relative to the ambient dimension $D$. It also shows that as the number of subspaces $n$ increases, the factor $\log N(n, \rho, d) $ also increases, making the condition more difficult to be satisfied. In terms of the density $\rho$, it is shown in \cite{Soltanolkotabi:AS13} that there exists a $\rho_0$ such that $c(\rho) = 1/\sqrt{8}$ when $\rho > \rho_0$. Then, it is easy to see that when $\rho > \rho_0$, the term that depends on $\rho$ is $\frac{\log \rho}{\log N(n, \rho, d) } = \frac{\log \rho}{\log (n (\rho d + 1))}$, which is a monotonically increasing function of $\rho$. This makes the condition easier to be satisfied as the density of points in the subspaces increases. Moreover, the probability of success is $1 - \frac{2d}{N(n, \rho, d) } - N(n, \rho, d)  e^ {-\sqrt{\rho} d}$, which is also an increasing function of $\rho$ when $\rho$ is greater than a threshold value. As a consequence, as the density of the points increases, the condition in Theorem \ref{thm:random1} becomes easier to satisfy and the probability of success also increases. %As a consequence,  as the points in each subspace get denser, not only the condition in Theorem \ref{thm:random1} is easier to be satisfied, but the probability of success also increases. 
	
	\section{Relationships with Other Methods}
	\label{sec:comparison}
	
	In this section we compare our results for SSC-OMP with those for other methods of the general form in \eqref{eqn:self-expression-regularized}. These methods include SSC-BP \cite{Elhamifar:ICASSP10, Elhamifar:TPAMI13,Soltanolkotabi:AS13}, which uses the $\ell_1$ norm as a regularizer, LRR \cite{Liu:TPAMI13} and LRSC \cite{Vidal:PRL14}, which use the nuclear norm, and LSR \cite{Lu:ECCV12} which uses the $\ell_2$ norm. We also compare our results to those of \cite{Dyer:JMLR13} for SSC-OMP. The comparison is in terms of whether the solutions given by these alternative algorithms are subspace-preserving. 
	
	\myparagraph{Independent Subspaces} Independence is a strong assumption on the union of subspaces. Under this assumption, a subspace has a trivial intersection with not only every other subspace, but also the union of all other subspaces. This case turns out to be especially easy for a large category of self-expressive subspace clustering methods \cite{Lu:ECCV12}, and SSC-BP, LRR, LRSC and LSR are all able to give subspace-preserving representations. Thus, in this easy case, the proposed method is as good as state-of-the-art methods.
	
	%We notice that a necessary condition for independence of subspaces is that the summation of the dimension of each subspace should be less than or equal to the dimension of the ambient dimension. When this is violated, we need to consider the performance under the general case below.
	
	\myparagraph{Arbitrary Subspaces} 
	To the best of our knowledge, when the subspaces are not independent, there is no guarantee~of correctness for LRR, LRSC and LSR. For SSC-BP,~as shown in \cite{Soltanolkotabi:AS13}, the representation is subspace-preserving if
	\begin{equation}
	\forall i = 1, \dots, n, \quad
	\max_{k: k \ne i} \mu(\cV^i, \cX ^k) < r_i, 
	\label{eq:SSC-BP-condition}
	\end{equation}
	where $\cV^i$ is a set of $N_i$ \emph{dual directions} associated with $X^i$. When comparing \eqref{eq:SSC-BP-condition} with our result in condition \eqref{eq:result_1}, we can see that the right hand sides are the same.  However, the left hand sides are not directly comparable, as no general relationship is known between the sets $\cV^i$ and $\cW^i$. Nonetheless, notice that the number of points in these two sets are not the same since $\card(\cV^i) = N_i$ and $\card(\cW^i) = N_i  d_i $. Therefore, if we assume that the points in $\cV^i$ and $\cW^i$ are distributed uniformly at random on the unit sphere, then $\mu(\cW^i, \cX ^k)$ is expected to be larger than $\mu(\cV^i, \cX ^k)$, making the condition for SSC-OMP less likely to be satisfied than that for SSC-BP. Now, when comparing \eqref{eq:SSC-BP-condition} with our condition in \eqref{eq:result_2}, we see that the left hand sides are comparable under a random model where both $\cV ^i$ and $\cX ^i$ contain $N_i$ points. However, the right hand side is $r_i ^2$, which is less than or equal to $r_i$ since the data are normalized and $r_i \le 1$. This again makes the condition for SSC-OMP more difficult to hold than that for SSC-BP. However, this difference is expected to vanish for large scale problems, and SSC-OMP is computationally more efficient, as we will see in Section \ref{sec:experiments}.
	
	\myparagraph{Random Subspaces} For the random model, \cite{Soltanolkotabi:AS13} shows that SSC-BP gives a subspace-preserving representation with probability $p > 1 - \frac{2}{N(n, \rho, d) } - N(n, \rho, d)  e^{-\sqrt{\rho} d}$
	if
	\begin{equation}
	d < \frac{c^2(\rho) \log \rho}{12} \frac{D}{\log N(n, \rho, d) }.
	\end{equation}
	If we compare this result with that of Theorem \ref{thm:random1}, we can see that the condition under which both methods succeed with high probability is exactly the same. The difference between them is that SSC-BP has a higher probability of success than SSC-OMP when $d > 1$. However,  it is easy to see that the difference in probability goes to zero as the density $\rho$ goes to infinity. This means that the performance difference vanishes as the scale of the problem increases.
	
	\myparagraph{Other Results for SSC-OMP}
	Finally, we compare our results with those in \cite{Dyer:JMLR13} for
	SSC-OMP. Define the principal angle between two subspaces $S_i$ and $S_k$ as:
	\begin{equation}
	\theta_{i, k}^* = \min_{\substack{\x \in S_i\\ \|\x\|_2 = 1}} \min_{\substack{\y \in S_k\\ \|\y\|_2 = 1}} \arccos\langle \x, \y \rangle.
	\end{equation}
	It is shown in \cite{Dyer:JMLR13} that the output of SSC-OMP is subspace-preserving if for all $i = 1, \dots, n$,
	\begin{equation}
	\label{eq:PriorOMP}
	\max_{k:k \ne i} \mu(\cX ^i, \cX ^k) < r_i - \frac{2 \sqrt{1 - (r_i)^2}}{\sqrt[4]{12}} \max_{k: k \ne i} \cos \theta_{i, k}^*.
	\end{equation}
	The merit of this result is that it introduces the subspace angles in the condition, and satisfies the intuition that the algorithm is more likely to work if the subspaces are far apart from each other. However, the RHS of the condition shows an intricate relationship between the intra-class property $r_i$ and the inter-class property $\theta^*_{i, k}$, which greatly complicates the interpretation of the condition. More importantly, as is shown in the Appendix, the condition is more restrictive than \eqref{eq:result_1}, which makes Theorem \ref{thm:deterministic_1} a stronger result.
	
	\section{Experiments}
	\label{sec:experiments}
	In this section, we first verify our theoretical results for SSC-OMP and compare them with those for SSC-BP by doing experiments on synthetic data using the random model. Specifically, we show that even if the subspaces are not independent,  the solution of OMP is subspace-preserving with a probability that grows with the density of data points. Second, we test the performance of the proposed method on clustering images of handwritten digits and human faces, and conclude that SSC-OMP achieves the best trade off between accuracy and efficiency.
	
	\myparagraph{Methods} We compare the performance of  state-of-the-art spectral subspace clustering methods, including LRSC \cite{Vidal:PRL14}, SSC-BP \cite{Elhamifar:TPAMI13}, LSR \cite{Lu:ECCV12}, and spectral curvature clustering (SCC) \cite{Chen:IJCV09}. In real experiments, we use the code provided by the respective authors for computing the representation matrix $C^*$, where the parameters are tuned to give the best clustering accuracy. We then apply the normalized spectral clustering in \cite{vonLuxburg:StatComp2007} to the affinity $|C^*|+|C^{*\transpose}|$, except for SCC which has its own spectral clustering step.

	\myparagraph{Metrics}
	We use two metrics to evaluate the degree to which the subspace-preserving property is satisfied. The first one is a direct measure of whether the solution is subspace preserving or not. However, for comparing with state of the art methods whose output is generally not subspace preserving, the second one measures how close the coefficients are from being subspace preserving.
	
	\mysubparagraph{Percentage of subspace-preserving representations ($p\%$)} this is the percentage of points whose representations are subspace-preserving. Due to inexactness in the solvers, coefficients with absolute value less than $10^{-3}$ are considered zero. A subspace-preserving solution gives $p = 100$.
	
	\mysubparagraph{Subspace-preserving representation error ($e\%$) \cite{Elhamifar:TPAMI13}} for each $\c_j$ in \eqref{eqn:self-expression}, we compute the fraction of its $\ell_1$ norm that comes from other subspaces and then average over all $j$, \ie, $e = \frac{100}{N}\sum_{j} (1 - \sum_i(\omega_{ij} \cdot |\c_{ij}|) / \|\c_j\|_1)$, where $\omega_{ij} \in \{0,1\}$ is the true affinity. A subspace-preserving $C$ gives $e = 0$.
	
	Now, the performance of subspace clustering depends not$\!$ only on the subspace-preserving property, but also the connectivity of the similarity graph, \ie, whether the data points in each cluster form a connected component of the graph.
	
	\mysubparagraph{Connectivity ($c$)} 
	For an undirected graph with weights $W \in \Re ^{N \times N}$ and
	degree matrix $D = \text{diag} (W \cdot \mathbf{1})$, where
	$\mathbf{1}$ is the vector of all ones, we use the second smallest
	eigenvalue $\lambda_2$ of the normalized Laplacian $L = I - D^{-1/2} W
	D^{-1/2}$ to measure the connectivity of the graph; $\lambda_2$ is in the range $[0, \frac{n-1}{n}]$ and is zero if and only if the graph is not connected \cite{Fiedler:CMathJ1975,Chung:1997}.
	In our case, we compute the algebraic connectivity for each cluster, $\lambda_2^i$, and take the quantity $c = \min_i \lambda_2^i$ as the measure of connectivity.
	
	Finally, we use the following two metrics to evaluate the performance of subspace clustering methods.
	
	\mysubparagraph{Clustering accuracy ($a\%$)} this is the percentage of
	correctly labeled data points. It is computed by matching the
	estimated and true labels as $a  = \max\limits_{\pi} {100\over N}
	\sum_{ij} Q_{\pi(i)j}^{est} Q_{ij}^{true}$, where $\pi$ is a
	permutation of the $n$ groups, $Q^{est}$ and $Q^{true}$ are the estimated and ground-truth labeling of data, respectively, with their $(i,j)$th entry being equal to one if point $j$ belongs to cluster $i$ and zero otherwise.
	
	\mysubparagraph{Running time ($t$)} for each clustering task using \textregistered Matlab.
	
	The reported numbers in all the experiments of this section are averages over 20 trials.

	\subsection{Synthetic Experiments}
	
	We randomly generate $n=5$ subspaces each of dimension $d = 6$ in an
	ambient space of dimension $D = 9$. Each subspace contains $N_i = \rho
	d$ sample points randomly generated on the unit sphere, where $\rho$
	is varied from $5$ to $3,\! 333$, so that the number of points varies
	from $150$ to $99,\! 990$. For SSC-OMP, we set $\epsilon$ in Algorithm \ref{alg:OMP} to be $10^{-3}$ and $k_{\max}$ to be $d=6$. For SSC-BP we use the $\ell_1$-Magic solver. Due to the computational complexity, SSC-BP
	is run for $\rho\leq 200$.  
	
	\begin{figure*}
		\centering
		\subfigure[\label{fig:synthetic-result-p} Subspace-preserving representation percentage]{\quad\includegraphics[scale = 0.35]{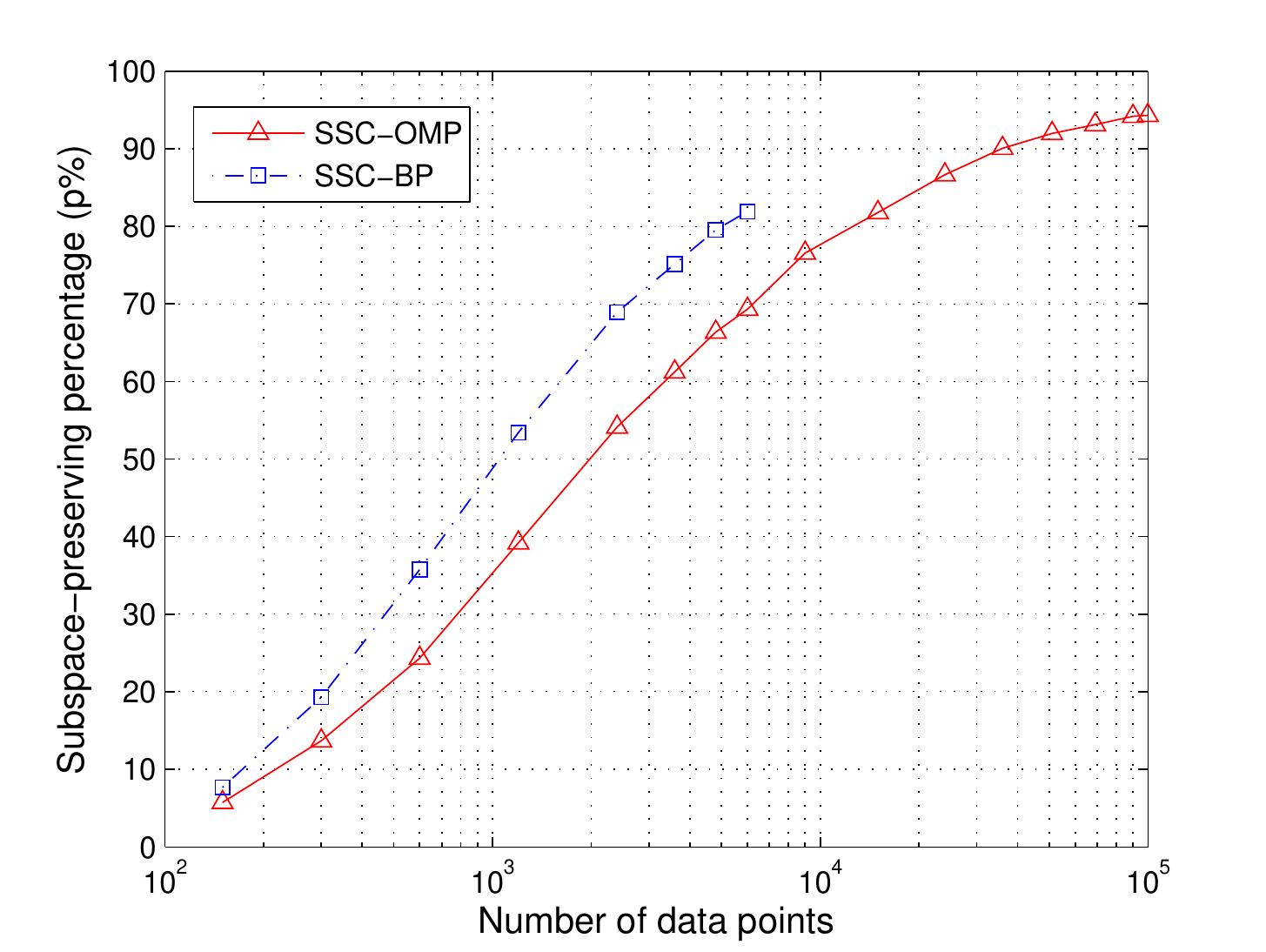}\quad}
		~
		\subfigure[\label{fig:synthetic-result-e}
		Subspace-preserving representation error]{\includegraphics[scale = 0.35]{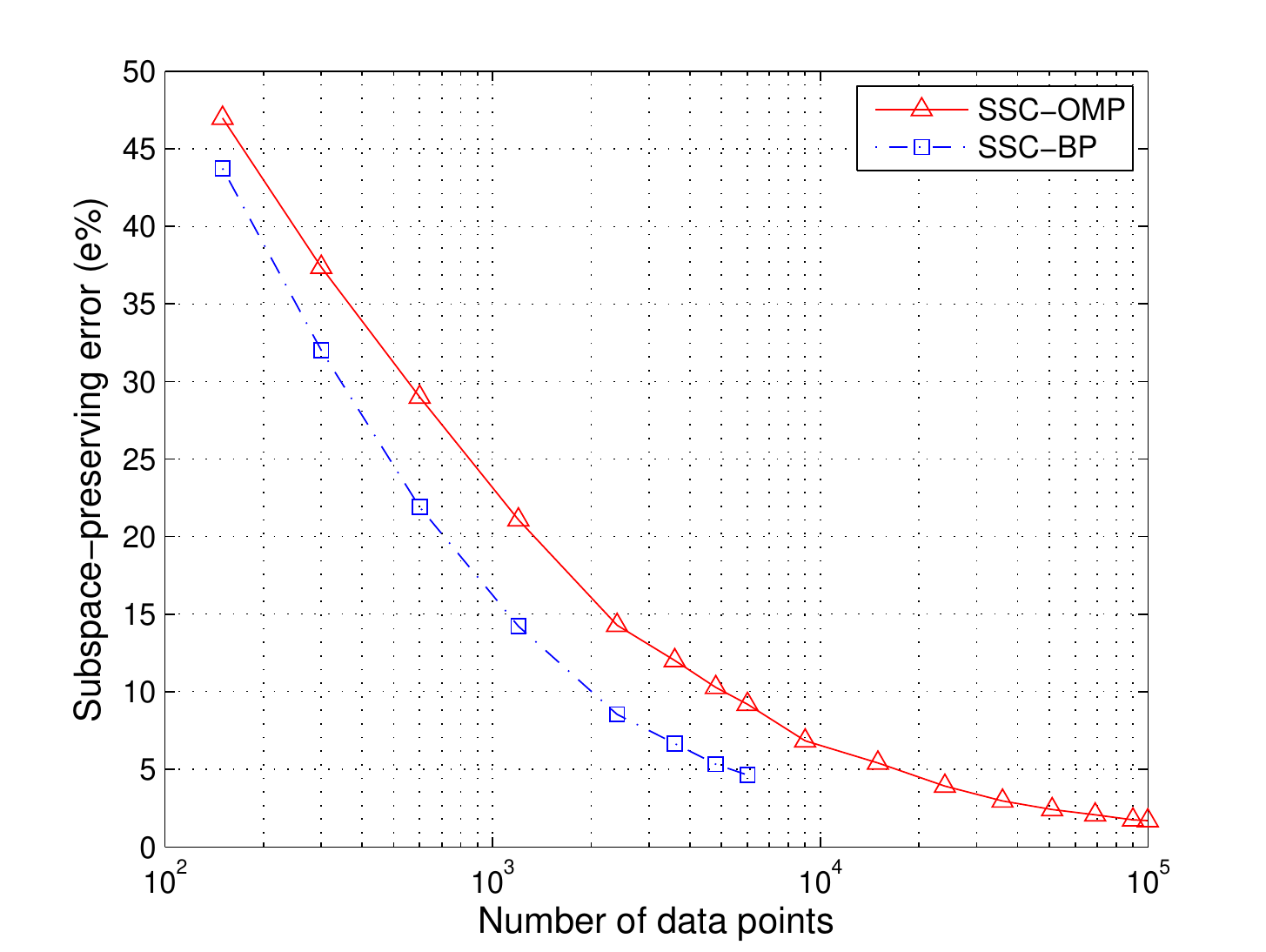}}
		\\
		\subfigure[\label{fig:synthetic-result-c}
		Connectivity]{\includegraphics[scale = 0.35]{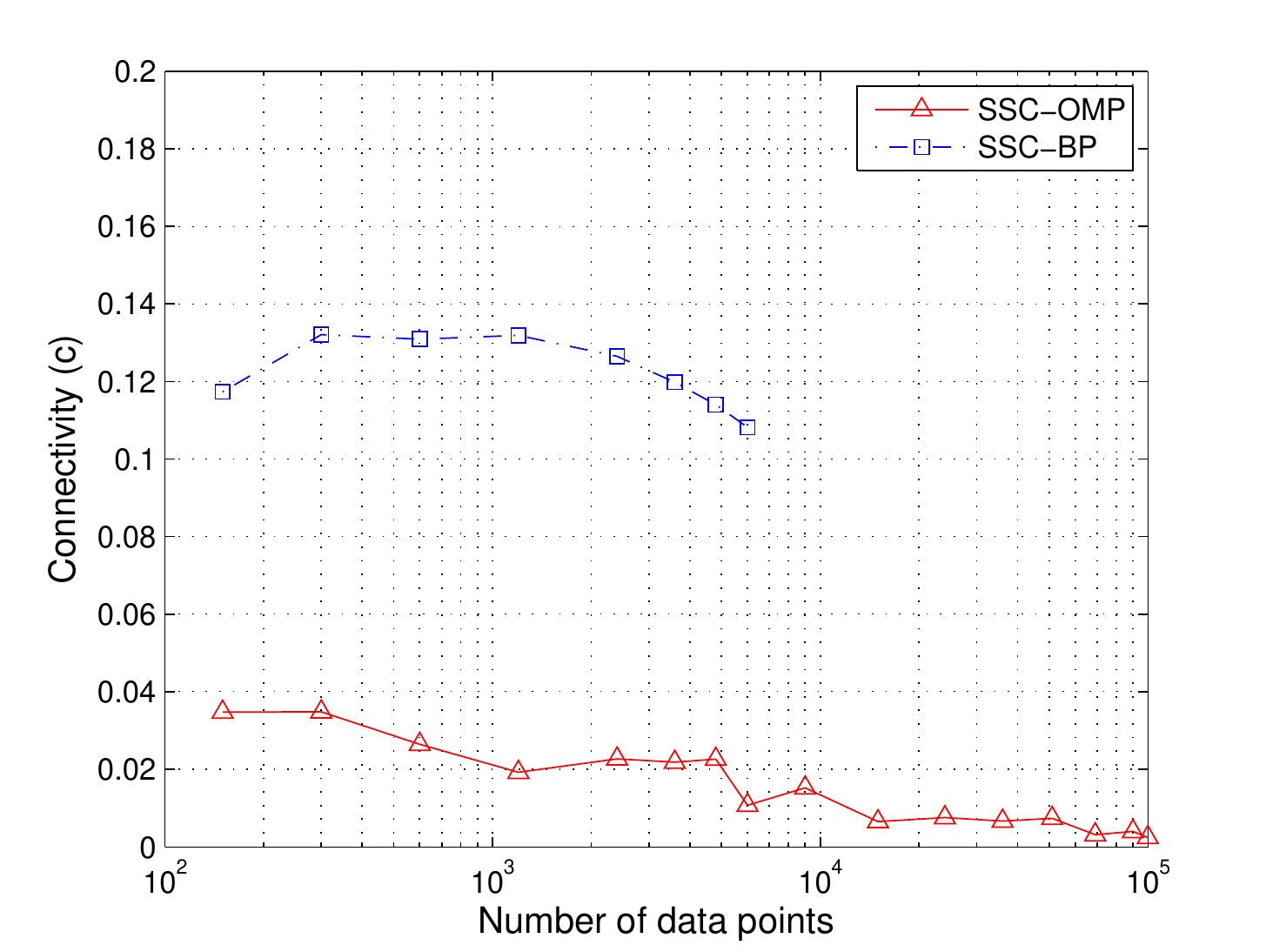}}
		~
		\subfigure[\label{fig:synthetic-result-a}
		Clustering accuracy]{\includegraphics[scale = 0.35]{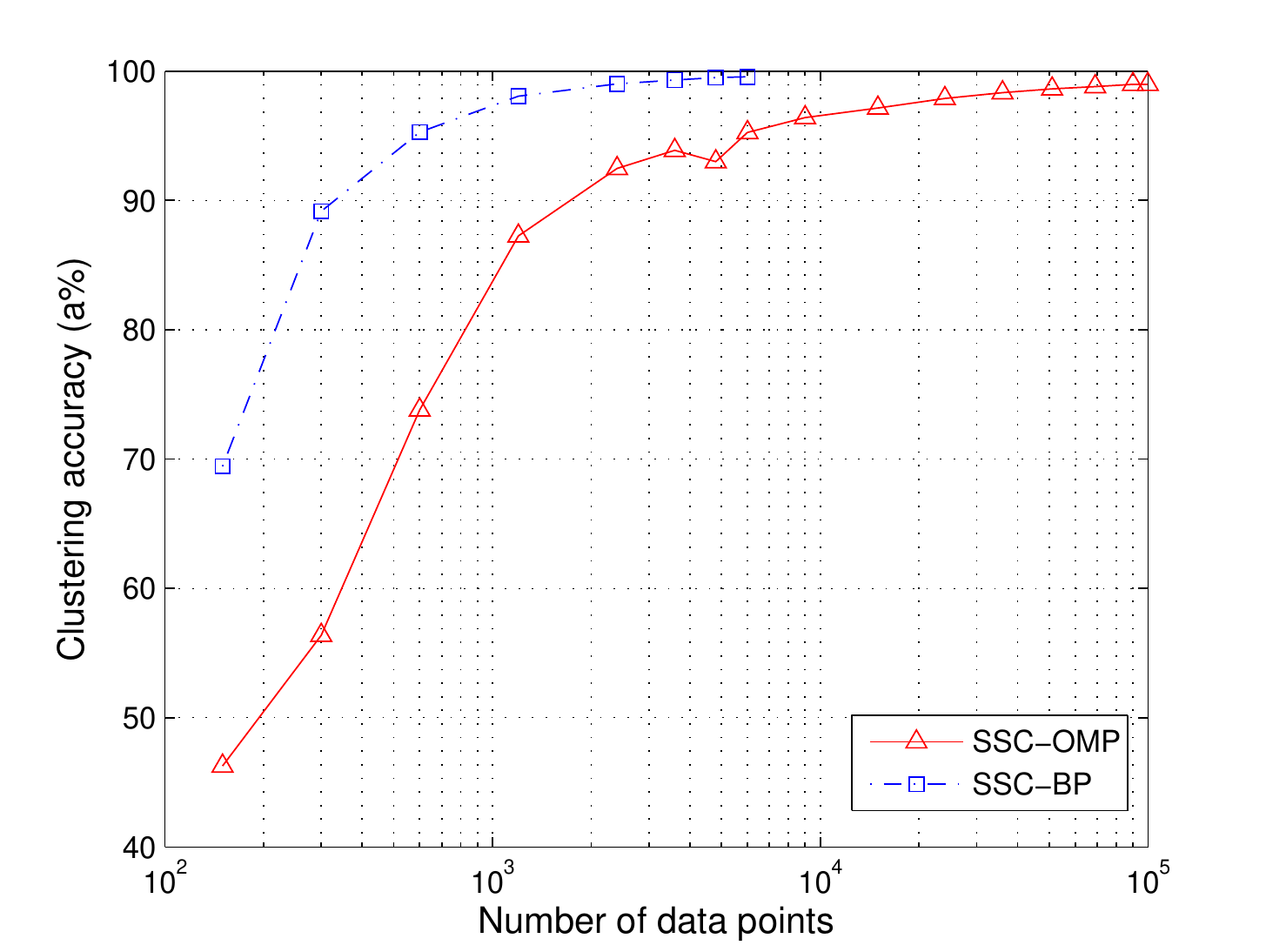}}
		~
		\subfigure[\label{fig:synthetic-result-t}Computational time]{\includegraphics[scale = 0.35]{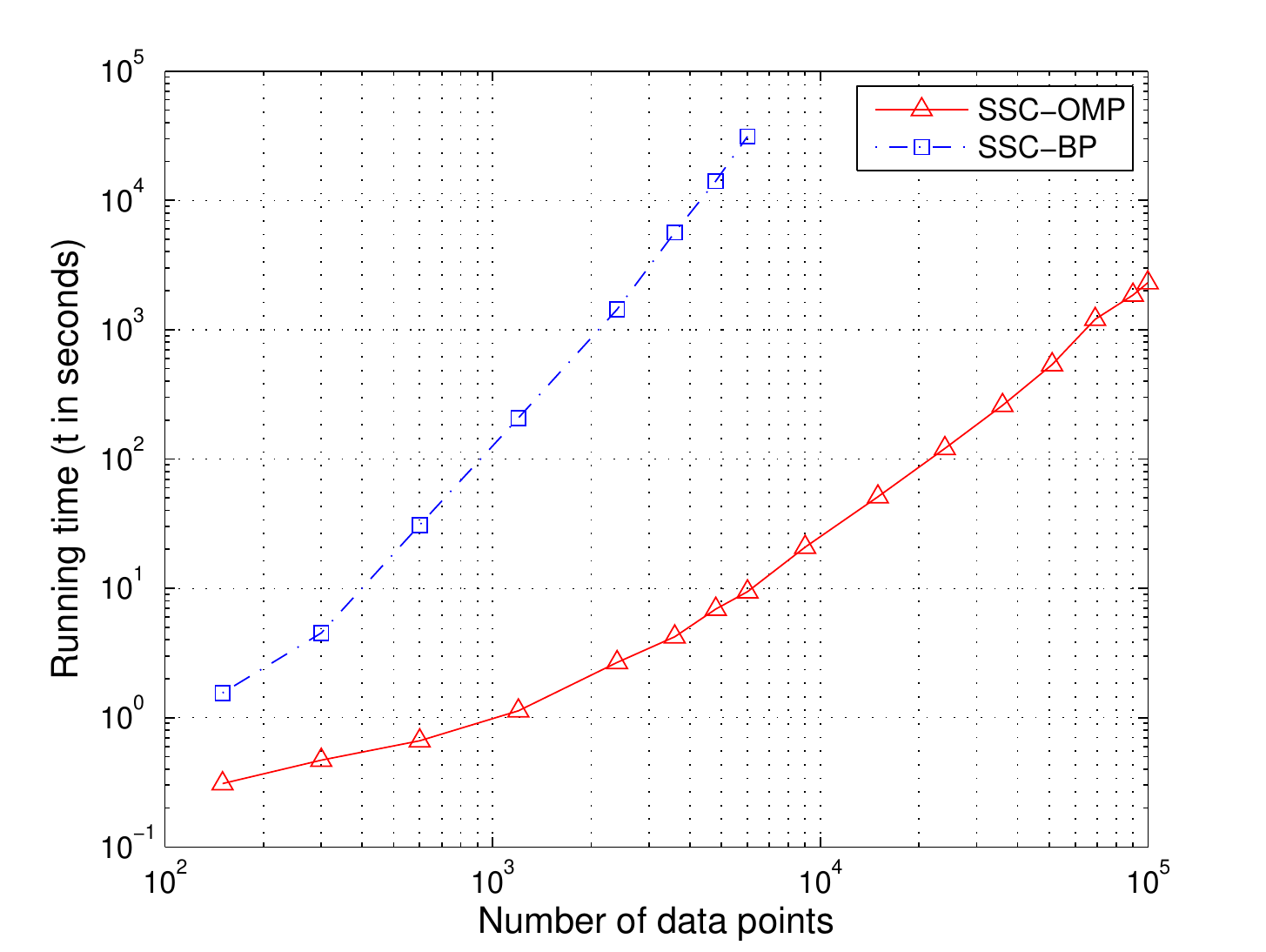}}
		\caption{Performance of SSC-OMP and SSC-BP on synthetic
			data. The data are drawn from 5 subspaces of dimension 6 in
			ambient dimension 9. Each subspace contains the same number
			of points and the overall number of points is varied from
			$150$ to $10^5$ and is shown in log scale. For SSC-BP,
			however, the maximum number of points tested is $6,\!000$
			due to time limit. Notice that the bottom right figure also
			uses log scale in the y-axis.}
		\label{fig:synthetic-result}
		\vspace{-1mm}
	\end{figure*}
	
	The subspace-preserving representation percentage and error are plotted in Figure \ref{fig:synthetic-result-p} and \ref{fig:synthetic-result-e}. Observe that the probability that SSC-OMP gives a subspace-preserving solution grows as the density of data point increases. When comparing with SSC-BP, we can see that SSC-OMP is outperformed. This matches our analysis that the condition for SSC-OMP to give a subspace-preserving representation is stronger (\ie, is more difficult to be satisfied). 
	
	From a subspace clustering perspective, we are more interested in how well the method performs in terms of clustering accuracy, as well as how efficient the method is in terms of running time. These results are plotted in Figure \ref{fig:synthetic-result-a} and \ref{fig:synthetic-result-t}, together with the  connectivity \ref{fig:synthetic-result-c}. We first observe that SSC-OMP does not have as good a connectivity as SSC-BP. This could be partly due to the fact that it has fewer correct connections in the first place as shown by the subspace-preserving percentage. For clustering accuracy, SSC-OMP is also outperformed by SSC-BP. This comes at no surprise as the sparse representations produced by SSC-OMP are not as subspace-preserving or as well connected as those of SSC-BP. However, we observe that as the density of data points increases, the difference in clustering accuracy also decreases, and SSC-OMP seems to achieve arbitrarily good clustering accuracy for large $N$. Also, it is evident from Figure~\ref{fig:synthetic-result-t} that SSC-OMP is significantly faster: it is $3$ to $4$ orders of magnitude faster than SSC-BP when clustering $6,\!000$ points. We conclude that as $N$ increases, the difference in clustering accuracy between SSC-OMP and SSC-BP reduces, yet SSC-OMP is significantly faster, which makes it preferable for large-scale problems.
	
	\subsection{Clustering Images of Handwritten Digits}
	
	In this experiment, we evaluate the performance of different subspace clustering methods on clustering images of handwritten digits. We use the MNIST dataset \cite{LeCun:1998}, which contains grey scale images of handwritten digits $0-9$. 
	
	In each experiment, $N_i\in\{50,100,200,400,600\}$ randomly chosen images for each of the $10$ digits are chosen. For each image, we compute a set of feature vectors using a scattering convolution network \cite{Bruna:PAMI13}. The feature vector is a concatenation of coefficients in each layer of the network, and is translation invariant and deformation stable. Each feature vector is of size $3,\! 472$. The feature vectors for all images are then projected to dimension $500$ using PCA. The subspace clustering techniques are then applied to the projected features. The results are reported in Table \ref{tbl:MNIST-result}.
	
	\begin{table}[t]
		\centering
		\caption{Performance of subspace clustering methods on the MNIST dataset. The data consists of a randomly chosen number $N_i \in \{50, 100, 200, 400, 600\}$ of images for each of the $10$ digits (\ie, $0$-$9$), with features extracted from a scattering network and projected to dimension $500$ using PCA.}
		\label{tbl:MNIST-result}
		\begin{tabular}{c|ccccc}
			\hline
			No. points &  500  & 1000  & 2000  & 4000  &        6000         \\ \hline
			\multicolumn{6}{l}{\textsl{e\%: subspace-preserving representation error}} \\ \hline
			SSC-OMP   & 42.13 & 38.73 & 36.20 & 34.22 &        33.22        \\
			SSC-BP   & 29.56 & 24.88 & 21.07 & 17.80 &        16.08        \\
			LSR     & 78.24 & 79.68 & 80.83 & 81.75 &        82.18        \\
			LRSC    & 81.33 & 81.99 & 82.67 & 83.15 &        83.27        \\
			SCC     & 89.89 & 89.87 & 89.85 & 89.81 &        89.81        \\ \hline
			\multicolumn{6}{l}{\textsl{a\%: average clustering accuracy}}    \\ \hline
			SSC-OMP   & 83.64 & 86.67 & 90.60 & 91.22 &        91.25        \\
			SSC-BP   & 83.01 & 84.06 & 85.58 & 86.00 &        85.60        \\
			LSR     & 75.84 & 78.42 & 78.09 & 79.06 &        79.91        \\
			LRSC    & 75.02 & 79.76 & 79.44 & 78.46 &        79.88        \\
			SCC     & 53.45 & 61.47 & 66.43 & 71.46 &        70.60        \\ \hline
			\multicolumn{6}{l}{\textsl{t(sec.): running time}}               \\ \hline
			SSC-OMP   &  2.7  & 11.4  & 93.8  & 410.4 &        760.9        \\
			SSC-BP   & 20.1  & 97.9  & 635.2 & 4533  &        13605        \\
			LSR     &  1.7  &  5.9  & 42.4  & 136.1 &        327.6        \\
			LRSC    &  1.9  &  6.4  & 43.0  & 145.6 &        312.9        \\
			SCC     & 31.2  & 48.5  & 101.3 & 235.2 &        366.8        \\ \hline
		\end{tabular}
		%	\vspace{-4mm}
	\end{table}
	
	The numbers show that both SSC-OMP and SSC-BP give a much smaller subspace-preserving representation error than all other methods, with SSC-BP being better than SSC-OMP. This is consistent with our theoretical analysis as there is no guarantee that LSR or LRSC give a subspace-preserving representation for non-independent subspaces, and SSC-BP has a higher probability of giving a subspace-preserving representation than SSC-OMP.
	
	In terms of clustering accuracy, SSC-OMP is better than SSC-BP, which in turn outperforms LSR and LRSC, while SCC performs the worst among the algorithms tested. 
	
	Considering the running time of the methods, SSC-BP requires much more computation, especially when the number of points is large. Though SSC-OMP is an iterative method, its computation time is about twice that of LSR and LRSC, which have closed form solutions. This again qualifies the proposed method for large scale problems.

	\subsection{Clustering Face Images with Varying Lighting}
	
	In this experiment, we evaluate the performance of different subspace clustering methods on the Extended Yale B dataset \cite{Kriegman:PAMI01}, which contains frontal face images of $38$ individuals under $64$ different illumination conditions, each of size $192 \times 168$. 
	%Under the Lambertian assumption, it has been shown that the data matrix formed by images of the same subject is of low rank \cite{Basri:PAMI03}. In this section, we test the ability of the proposed method to cluster images of different individuals.
	In this case, the data points are the original face images downsampled to $48 \times 42$ pixels. In each experiment, we randomly pick $n \in \{2, 10, 20, 30, 38\}$ individuals and take all the images (under different illuminations) of them as the data to be clustered.
	
	\begin{table}[t]
		\centering
		\caption{Performance of subspace clustering methods on EYaleB
			dataset. A 'NA' denotes that a running error was returned by
			the solver.  The data consists of face images under 64
			different illumination conditions of a randomly picked $n =
			\{2, 10, 20, 30, 38\}$ individuals. Images are downsampled
			from size $192 \times 168$ to size $48 \times 42$ and used
			as the feature vectors (data points).}
		\label{tbl:EYaleB-result}
		\begin{tabular}{c|ccccc}
			\hline
			No. subjects &   2   &  10   &  20   &  30   &             38              \\ \hline
			\multicolumn{6}{l}{\textsl{e\%: subspace-preserving representation error}} \\ \hline
			SSC-OMP    & 4.14  & 13.62 & 16.80 & 18.66 &            20.13            \\
			SSC-BP    & 2.70  & 10.33 & 12.67 & 13.74 &            14.64            \\
			LSR      & 22.77 & 67.07 & 79.52 & 84.94 &            87.57            \\
			LRSC     & 26.87 & 69.76 & 80.58 & 85.56 &            88.02            \\
			SCC      & 48.70 &  NA   &  NA   & 96.57 &            97.25            \\ \hline
			\multicolumn{6}{l}{\textsl{a\%: average clustering accuracy}}              \\ \hline
			SSC-OMP    & 99.18 & 86.09 & 81.55 & 78.27 &            77.59            \\
			SSC-BP    & 99.45 & 91.85 & 79.80 & 76.10 &            68.97            \\
			LSR      & 96.77 & 62.89 & 67.17 & 67.79 &            63.96            \\
			LRSC     & 94.32 & 66.98 & 66.34 & 67.49 &            66.78            \\
			SCC      & 78.91 &  NA   &  NA   & 14.15 &            12.80            \\ \hline
			\multicolumn{6}{l}{\textsl{t(sec.): running time}}               \\ \hline
			SSC-OMP    &  0.6  &  8.3  & 31.1  & 63.7  &       108.6       \\
			SSC-BP    & 49.1  & 228.2 & 554.6 & 1240  &       1851        \\
			LSR      &  0.1  &  0.8  &  3.1  &  8.3  &       15.9        \\
			LRSC     &  1.1  &  1.9  &  6.3  & 14.8  &       26.5        \\
			SCC      & 50.0  &  NA   &  NA   & 520.3 &       750.7       \\ \hline
		\end{tabular} 
		%	\vspace{-4mm}
	\end{table}
	
	The clustering performance of different methods is reported in Table \ref{tbl:EYaleB-result}. In terms of subspace-preserving recovery, we can observe a slightly better performance of SSC-BP over SSC-OMP in all cases. The other three methods have very large subspace-preserving representation errors especially when the number of subjects is $n \geq 10$. In terms of clustering accuracy, all methods do fairly well when the number of clusters is $2$ except for SCC, which is far worse than the others. As the number of subjects increases from $10$ to $38$, LSR and LRSC can only maintain an accuracy of about $60\%$ and SCC is even worse, but SSC-OMP and SSC-BP maintain a reasonably good performance, although the accuracy also degrades gradually. We can see that SSC-BP performs slightly better when the number of subjects is $2$ or $10$, but SSC-OMP performs better when $n > 10$.
	
	\vspace{-1mm}
	\section{Conclusion and Future Work}
	\vspace{-1mm}
	\label{sec:conclusion}
	We studied the sparse subspace clustering algorithm based on OMP. We derived theoretical conditions under which SSC-OMP is guaranteed to give a subspace-preserving representation. Our conditions are broader than those of state-of-the-art methods based on $\ell_2$ or nuclear norm regularization, and slightly weaker than those of SSC-BP. Experiments on synthetic and real world datasets showed that SSC-OMP is much more accurate than state-of-the-art methods based on $\ell_2$ or nuclear norm regularization and about twice as slow. On the other hand, SSC-OMP is slightly less accurate than SSC-BP but orders of magnitude faster. Moreover, we are one of the few \cite{Adler:SPL13,Adler:TNNLS15} that have demonstrated subspace clustering experiments on $100,\!000$ points. Overall, SSC-OMP provided the best accuracy versus computation trade-off for large scale subspace clustering problems. We note that while the optimization algorithm for SSC-BP in \cite{Elhamifar:TPAMI13} is inefficient for large scale problems, our most recent work \cite{You:CVPR16-EnSC} presents a scalable algorithm for elastic net based subspace clustering. A comparison with this work is left for future research.
	
	\myparagraph{Acknowledgments} 
	Work supported by NSF grant 1447822.\!\!\!
	
	\numberwithin{equation}{section}
	\numberwithin{figure}{section}
	\numberwithin{table}{section}
	
	\begin{appendices}
		
		In the appendices, we provide proofs for the theoretical results in the paper. We also provide the parameters of all the clustering methods studied in the handwritten digits and face image clustering experiments.
		
		\section{Proof of Theorem \ref{thm:SSC-OMP-independent}}
		
		In Theorem \ref{thm:SSC-OMP-independent} , we claim that the SSC-OMP gives subspace preserving representations if subspaces are independent. Here we provide the proof.
		
		\begin{theorem*}
			If the subspaces are independent, OMP gives a subspace-preserving representation of every data point.
		\end{theorem*}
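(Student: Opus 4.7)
Fix any $\x_j \in S_i$ and consider the run of OMP on $(X_{-j}, \x_j)$; the column $\c_j^*$ of $C^*$ is obtained by padding this output with a zero in the $j$th slot, so it suffices to show the OMP output is subspace preserving. I would split the argument into three short steps: (i) OMP produces an \emph{exact} representation $\x_j = \sum_{m \in T} c_m \x_m$, where $T$ is the final support set; (ii) the selected atoms $\{\x_m : m \in T\}$ are linearly independent; (iii) independence of $S_i$ and $S_{(-i)} := \sum_{m \ne i} S_m$ forces $c_m = 0$ whenever $\x_m \notin S_i$.

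Step (i) follows because $\x_j \in S_i$ can be expressed as a linear combination of the other points of $S_i$, so $\x_j \in \text{span}(X_{-j})$; with $\epsilon = 0$ and $k_{\max}$ large, OMP iterates until $\q_k = \0$, which must occur within $\dim(\text{span}(X_{-j}))$ steps. Step (ii) is the standard observation that at the iteration where OMP appends index $i^*$, the residual $\q_k$ is orthogonal to the span of the previously chosen atoms while $|\x_{i^*}^\transpose \q_k| > 0$; hence $\x_{i^*}$ cannot lie in that span, and induction on $k$ yields linear independence of $\{\x_m : m \in T\}$. In particular, the final least-squares step of Algorithm~\ref{alg:OMP} produces the unique coefficient vector expanding $\x_j$ in this linearly independent basis.

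For step (iii), I would partition $T = T^i \cup T^{(-i)}$ into indices whose points lie in $S_i$ and indices whose points lie in $S_{(-i)}$, and rewrite the exact representation as
\[
\x_j - \sum_{m \in T^i} c_m \x_m \;=\; \sum_{m \in T^{(-i)}} c_m \x_m.
\]
The left-hand side lies in $S_i$ and the right-hand side lies in $S_{(-i)}$. By independence, $S_i \cap S_{(-i)} = \{\0\}$, so the right-hand side is $\0$. Because $\{\x_m : m \in T^{(-i)}\}$ is a subset of the linearly independent family from (ii), each $c_m$ with $m \in T^{(-i)}$ must vanish, and the representation is subspace preserving. Since $\x_j$ was arbitrary, this holds for every data point.

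The only mild obstacle is tying steps (i) and (ii) cleanly to the precise algorithmic description of Algorithm~\ref{alg:OMP}, i.e.\ verifying that its final least-squares output coincides with the unique expansion in the selected linearly independent atoms. This is routine bookkeeping rather than a genuine difficulty; once it is in place the proof needs no coherence, inradius, or geometric estimates, reducing entirely to uniqueness of representation in a linearly independent set combined with the independence hypothesis.
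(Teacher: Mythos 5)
Your proposal is correct and follows essentially the same route as the paper's own proof: establish that OMP terminates with zero residual so the output is an exact representation (relying, as the paper does via its footnote $\rank(X_{-j}^i)=\dim(S_i)$, on there being enough samples so that $\x_j$ lies in the span of the other points), note the selected atoms are linearly independent because each new atom has nonzero inner product with a residual orthogonal to the previously selected span, and then split the representation into its $S_i$ and $S_{(-i)}$ parts, which independence forces to vanish on the $S_{(-i)}$ side. Your termination argument (residual must vanish within $\dim\bigl(\mathrm{span}(X_{-j})\bigr)$ steps) is a slightly cleaner packaging of the paper's argument, but not a genuinely different approach.
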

		
		\begin{proof}
			Consider a data point $\x_j \in S_i$. We need to show that the output of $\OMP(X_{-j}, \x_j)$ is subspace-preserving. As an assumption, the termination parameters in OMP are set to be $\epsilon = 0$ and $k_{\text{max}} = N-1$ (\ie, the total number of points in the dictionary $X_{-j}$). This means, in particular, that OMP always terminates with some iteration $k^* \leq N-1$ with $q_{k^*} = 0$, which can be seen to hold as follows. If the OMP algorithm computes $q_k = 0$ for some $k \leq N-2$, then there is nothing to prove.  Thus, to complete the proof, we suppose that $q_k \neq 0$ for all $0 \leq k \leq N-2$, and proceed to prove that $q_{N-1} = 0$.  In the OMP algorithm, the columns of $X_{-j}$ indexed by $T_k$ for any $k$ are always linearly independent. This is evident from step \ref{step:residual} of Algorithm \ref{thm:SSC-OMP-independent}, as the residual vector $\q_k$ is orthogonal to every column of $X_{-j}$ indexed by $T_k$, thus when choosing a new entry to be added to $T_k$ in step \ref{step:best-residual} of Algorithm \ref{thm:SSC-OMP-independent}, points that are linearly dependent with the points indexed by $T_k$ would have zero inner product with $\q_k$, so would not be picked. Since all of the columns of $X_{-j}$ have been added by iteration $N-1$, we know that the columns of $X_{-j}$ are linearly independent and must contain at least $d_i$ linearly independent vectors from $S_i$~\footnote{We make the assumption that there are enough samples on each subspace. More specifically, $\forall i, \forall \x_j \in S_i, \rank(X_{-j}^i) = \dim(S_i)$.}. We conclude that $q_{k^*} = q_{N-1} = 0$ with $k^* = N-1$, as claimed. In light of this result and denoting $T^* := T_{k^*}$, it follows from $\q_{k^*}  = \0$ that $P_{T^*} \cdot \x_j = \x_j$ by line 4 of Algorithm 1, so that $\x_j$ is in the range of matrix $X_{T^*}$, which denotes the columns of $X_{-j}$ indexed by $T^*$. 
			
			%		Assume the OMP algorithm terminates at iteration $k := k^*$. It would have the two cases of either $\q_{k^*}  = \0$ or $k^* = N-1$. Denote $T^* := T_{k^*}$. If $\q_{k^*}  = \0$, then $P_{T^*} \cdot \x_j = \x_j$ by line 4 of Algorithm 1, so $\x_j$ is in the range of matrix $X_{T^*}$ which denotes the columns of $X_{-j}$ indexed by $T^*$. If $k^* = N-1$, then the support set $T^*$ indexes all the data points in the dictionary $X_{-j}$, including the points $X_{-j} ^i$. In this case, since $\x_j \in S_i$ is also in the range of $X_{-j} ^i$, it has that $\x_j$ is in the range of $X_{T^*}$. Thus, in both cases, $\x_j$ is in the span of columns indexed by the support set $T^*$. 
			
			As a consequence of the previous paragraph, the final output of OMP, given by
			\[
			\c^* = \argmin _{\c: \text{Supp}(\c) \subseteq T^*} \|\x_j - X_{-j} \c\|_2,
			\]
			will satisfy $\x_j = X_{-j} \cdot \c^*$. We rewrite it as
			\begin{equation}
			\x_j - \sum_{\substack{m: \x_m \in S_i\\ m \in T^*}} \x_m \cdot c^*_m = \sum_{\substack{m: \x_m \notin S_i\\ m \in T^*}} \x_m \cdot c^*_m.
			\label{eq:theorem1_eq1}
			\end{equation}
			
			Observe that the left hand side of \eqref{eq:theorem1_eq1} is in subspace $S_i$ while the right hand side is in subspace $S_{-i} := \sum_{m \ne i} S_m$. By the assumption that the set of all subspaces is independent, we know $S_i$ and $S_{-i}$ are also independent, so they intersect only at the origin. As a consequence, we have 
			\begin{equation}
			0 = \!\sum_{\substack{m: \x_m \notin S_i \\ m\in T^*}} \!\!\!\!\x_m \cdot c^*_m = \!\!\sum_{m: \x_m \notin S_i} \!\!\!\!\x_m \cdot c^*_m,
			\label{eq:theorem1_eq2}
			\end{equation}
			where we also used the fact that $c^*_m =0$ for all $m\notin T^*$.
			%			Finally, note that in the OMP algorithm, the columns of $X_{-j}$ indexed by $T_k$ for any $k$ are always linearly independent. This is evident from step 4 of Algorithm 1, as the residual vector $\q_k$ is orthogonal to every column of $X_{-j}$ indexed by $T_k$, thus when choosing a new entry to be added to $T_k$ in step 3 of Algorithm 1, points that are not linearly independent with the points indexed by $T_k$ would have zero inner product with $\q_k$, so would not be picked. 
			Combining \eqref{eq:theorem1_eq2} with the early fact that the columns of $X_{-j}$ indexed by $T_k$ are linearly independent for all $k$ (this includes $k = k^*$), we know that
			\begin{equation} \label{eq:c-zero}
			c_m ^* = 0 \ \text{ if $\x_m \notin S_i$ and $m\in T^*$.} 
			\end{equation}
			Finally, we use this to prove that $c^*$ is subspace-preserving.  To this end, suppose that $c^*_j \neq 0$, which from the definition of $c^*$ means that $j\in T^*$.  Using this fact, $c^*_j \neq 0$, and \eqref{eq:c-zero} allows us to conclude that $c^*_j \in S_i$.  Thus the solution $c^*$ is subspace-preserving.
		\end{proof}
		
		\section{Proof of Lemma \ref{lem:ss_individual}}
		
		In this section, we provide a detailed proof of Lemma \ref{lem:ss_individual}. The proof follows straight forwardly by comparing inductively the steps of the procedure  $\OMP(X_{-j}, \x_j)$ and the procedure of the fictitious problem $\OMP(X_{-j} ^i, \x_j)$. The idea is that these two procedures follow the same ``path'' if the condition of the lemma is satisfied.
		
		\begin{lemma}
			OMP gives a subspace-preserving representation for point $\x_j \in S_i$ in at most $d_i$ iterations if 
			%	\[
			%		\|X^{(-l) \transpose} w\|_\infty < \|X_{-i}^{(l) \transpose} w\|_\infty, \forall w \in \cW_i^{(l)}.
			%	\]
			\begin{equation}
			\label{eqn:lemma1_eq}
			\forall \w \in \cW_j^i
			\quad
			%		\max_{k: \x_k\not\in S_i} |\w^\transpose \x_k| <  \max_{\substack{k: \x_k \in S_i, k \neq j}} |\w^\transpose \x_k | 
			\max_{\x \in \bigcup_{k \ne i} \cX^k} | \w^\transpose \x | < \max_{\x \in \cX^i \backslash \{\x_j\}}|\w^\transpose \x | .
			\end{equation}
		\end{lemma}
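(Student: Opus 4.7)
The plan is to run $\OMP(X_{-j}, \x_j)$ (the actual algorithm on the full dictionary) in parallel with the fictitious procedure $\OMP(X_{-j}^i, \x_j)$ (the algorithm restricted to dictionary columns in $S_i$), and show by induction on the iteration index $k$ that the two produce the same sequence of residuals and select the same atoms. Since the fictitious run only ever sees points from $S_i$, coincidence of the two runs immediately gives a subspace-preserving representation.

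For the base case, both runs initialize with $\q_0 = \x_j$ and $T_0 = \emptyset$. For the inductive step, suppose through iteration $k$ both runs have produced identical supports $T_0, \ldots, T_k$ and identical residuals $\q_0 = \tilde\q_0, \ldots, \q_k = \tilde\q_k$. If $\q_k = \0$, both runs terminate simultaneously and we are done. Otherwise, by construction $\q_k$ is produced in step \ref{step:residual} of the fictitious $\OMP(X_{-j}^i, \x_j)$, so $\w := \q_k / \|\q_k\|_2 \in \cW_j^i$. The hypothesis \eqref{eqn:lemma1_eq} then yields
\begin{equation*}
\max_{\x \in \cX^i \setminus \{\x_j\}} |\w^\transpose \x| \;>\; \max_{\x \in \bigcup_{k' \neq i} \cX^{k'}} |\w^\transpose \x|.
\end{equation*}
Because the dictionary $X_{-j}$ of the actual run is the union of $\cX^i \setminus \{\x_j\}$ and $\bigcup_{k' \neq i} \cX^{k'}$, the argmax in step \ref{step:best-residual} of the actual run lies in $\cX^i \setminus \{\x_j\}$, which is exactly the dictionary of the fictitious run; hence the same index $i^*$ is selected (using the deterministic tie-breaking rule from the footnote). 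The updated projection $P_{T_{k+1}}$ and the updated residual $\q_{k+1}$ therefore coincide with $\tilde\q_{k+1}$, closing the induction.

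The fictitious run must terminate in at most $d_i$ iterations: at every step it appends a linearly independent column from $S_i$ to its support (the residual is orthogonal to previously selected columns, as noted in the standard analysis of OMP), and $S_i$ is $d_i$-dimensional, so after at most $d_i$ steps the residual is forced to $\0$. By the induction, the actual run also terminates in at most $d_i$ iterations with the same support $T^* \subseteq \{m : \x_m \in S_i\}$. The final output $\c^* = \argmin_{\c : \operatorname{Supp}(\c) \subseteq T^*} \|\x_j - X_{-j}\c\|_2$ is supported on indices of points in $S_i$, which is precisely the subspace-preserving condition from Definition \ref{def:subspace-preserving}.

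The only delicate point, and the one to double check, is the use of $\w \in \cW_j^i$ at each inductive step: a priori $\cW_j^i$ is defined through the fictitious run, so we must verify that the residuals produced up to step $k$ in the actual run genuinely belong to the fictitious trajectory. This is what the induction hypothesis supplies — the two trajectories agree through step $k$, so the current $\q_k$ is equal to $\tilde\q_k$ and thus lies in $\cW_j^i$ by definition, legitimizing the application of \eqref{eqn:lemma1_eq} at iteration $k+1$. Apart from this bookkeeping, the argument is a routine parallel-runs induction.
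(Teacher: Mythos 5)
Your proposal is correct and follows essentially the same route as the paper: a parallel-runs induction showing that $\OMP(X_{-j},\x_j)$ and the fictitious $\OMP(X_{-j}^i,\x_j)$ produce identical residuals and selections, with the strict inequality \eqref{eqn:lemma1_eq} forcing each selected atom into $\cX^i\setminus\{\x_j\}$, and the $d_i$-iteration bound following from linear independence of the selected in-subspace atoms. Your explicit handling of ties and of the legitimacy of invoking $\w\in\cW_j^i$ at each step is just a more careful writing of the same argument.
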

		\begin{proof}
			Let $k^*$ be the number of iterations computed by the procedure $\OMP(X_{-j}, \x_j)$ so that $\q_{k^*} = \0$ (this was established in the first paragraph of the proof for Theorem \ref{thm:SSC-OMP-independent}). We prove that the solution to $\OMP(X_{-j}, \x_j)$ is subspace preserving by showing that $T_{k^*}$ only contains indexes of points from the $i$-th subspace. This is shown by induction, in the way that $T_{k}$ contains points from the $i$-th subspace for every $0 \le k \le k^*$.
			
			The set of residual directions $\cW_j ^i$ introduced in Definition \ref{def:residual-direction} plays an essential role in this proof. For notational clarity, we denote $\hat{\q}_k$ to be the residual vector generated at iteration $k$ of the algorithm $\OMP(X_{-j} ^i, \x_j)$ (note that this is the fictitious problem). The residual vectors of $\OMP(X_{-j}, \x_j)$ are denoted by $\q_k$. In the induction, we also show that $\OMP(X_{-j} ^i, \x_j)$ does not terminate at any $k < k^*$, and that $\q_k = \hat{\q}_k$ whenever $k \le k^*$.
			
			First, in the case of $k = 0$, the argument that $T_0$ only contains indexes of points that are from subspace $i$ is trivially satisfied since $T_0$ is empty. Also, $\q_0 = \hat{\q}_0$ is satisfied because they are both set to be $\x_j$ in line \ref{step:initialize} of Algorithm \ref{alg:OMP}.
			
			Now, given that $\q_k = \hat{\q}_k$ for some $k < k^*$ and that $T_k$ contains points only from subspace $S_i$, we show that $\q_{k+1} = \hat{\q}_{k+1}$ and that $T_{k+1}$ contains indexes of points from subspace $i$. This could be shown by noticing that the added entry in step \ref{step:best-residual} of Algorithm 1 is given by $\argmax\limits_{m \le N, m \ne j} |\x _m ^\transpose \q _k|$. Here, since $\q_k = \hat{\q}_k$, we have that $\q _k / \|\q_k\|_2$ is in the set $\cW_j ^i$. Then, by using condition \eqref{eqn:lemma1_eq}, we know that the $\argmax$ will give an index that corresponds to a point in $S_i$. This guarantees that $T_{k+1}$ only contains points from subspace $S_i$. Moreover, the picked point is evidently the same as the point picked at iteration $k$ of the $\OMP(X_{-j} ^i, \x_j)$. It then follows from step \ref{step:residual} of Algorithm \ref{alg:OMP} that the resultant residuals, $\q_{k+1}$ and $\hat{\q}_{k+1}$, are also equal. In the case of $k+1 < k^*$, this means that $\q_{k+1} = \hat{\q}_{k+1} \ne 0$, so the fictitious problem $\OMP(X_{-j} ^i, \x_j)$ does not terminate at this step. This finishes the mathematical induction.
			
			The fact that OMP terminates in at most $d_i$ iterations follows from the following facts: (i) we have established that $\OMP(X_{-j},\x_j)$ produces the same computations as does $\OMP(X_{-j}^i,\x_j)$; (ii)  the collection of vectors selected by $\OMP(X_{-j}^i,\x_j)$ are linearly independent and contained in subspace $S_i$; and (iii) the dimension of $S_i$ is equal to $d_i$.
		\end{proof}

		\section{Proof of Lemma \ref{lem:chain-inequalities}}
		
		In this section, we prove Lemma  \ref{lem:chain-inequalities} in Section \ref{sec:main-results}.
		
		\begin{lemma*}
			Let $\x_j \in S_i$. Then, for all $\w \in \cW_j^i$, we have:
			\begin{align}
			%&\max_{k: \x_k\not\in S_i} \!\! |\w^\transpose \x_k| 
			&\max_{\x \in \bigcup_{k \ne i} \!\cX^k} \!\! |\w^\transpose \x|
			\le \!\max_{k: k \ne i} \mu(\cW^i, \cX^k)
			\le \!\max_{k: k \ne i} \mu(\cX^i, \cX^k) / r_i;\nonumber\\
			%&\max_{k: \x_k \in S_i, k \neq j} |\w^\transpose \x_k |
			&\max_{\x \in \cX^i \backslash \{\x_j\}} |\w ^\transpose \x|
			\ge r(\cP_{-j}^i) \ge r_i.
			\end{align}
		\end{lemma*}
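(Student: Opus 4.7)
The plan is to prove the four inequalities in sequence: the first and fourth are essentially immediate from definitions, the third is the standard inradius/support-function duality for symmetric convex bodies, and the second (the bridge from $\cW^i$ to $\cX^i$) is the main technical step. Throughout, I would rely on the fact that any $\w \in \cW_j^i$ satisfies $\w \in S_i$ with $\|\w\|_2 = 1$: the former because every residual computed during $\OMP(X_{-j}^i, \x_j)$ is obtained by successive orthogonal projections starting from $\x_j \in S_i$, and the latter by the normalization in Definition~\ref{def:residual-direction}.

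For the first inequality, since $\w \in \cW_j^i \subseteq \cW^i$, for every $\x \in \cX^k$ with $k \ne i$ we have $|\w^\transpose \x| \le \mu(\cW^i, \cX^k) \le \max_{k' \ne i} \mu(\cW^i, \cX^{k'})$; taking the max over $\x \in \bigcup_{k \ne i}\cX^k$ gives the stated bound. The fourth inequality $r(\cP_{-j}^i) \ge r_i$ holds by the definition $r_i := \min_{j'} r(\cP_{-j'}^i)$. For the third inequality I would invoke the standard identity that for a symmetric convex body $\cP$ spanning a subspace $V$, its inradius satisfies $r(\cP) = \min_{\w \in V,\,\|\w\|_2 = 1} h_\cP(\w)$ with support function $h_\cP(\w) = \max_{\y \in \cP} \w^\transpose \y$; for a symmetric polytope this support function is attained at a vertex. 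Applied to $\cP_{-j}^i = \conv(\pm \cX_{-j}^i) \subset S_i$ and the unit vector $\w \in S_i$, and using that $\cX_{-j}^i$ spans $S_i$ (the standard assumption that each subspace has enough samples), this yields $\max_{\x \in \cX^i \setminus \{\x_j\}} |\w^\transpose \x| = h_{\cP_{-j}^i}(\w) \ge r(\cP_{-j}^i)$.

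The second inequality is the heart of the lemma. Let $\cP^i := \conv(\pm \cX^i)$, which contains every $\cP_{-j'}^i$ and therefore has inradius $r(\cP^i) \ge r_i$. The plan is to represent any unit $\w \in S_i$ as an $\ell_1$-bounded combination of points in $\cX^i$: the ball of radius $r_i$ in $S_i$ is contained in $\cP^i$, so $r_i \w \in \cP^i$, and since $\cP^i = \{\sum_m \beta_m \x_m^i : \sum_m |\beta_m| \le 1\}$ by the standard description of a symmetric convex hull, we obtain $r_i \w = \sum_m \beta_m \x_m^i$ with $\sum_m |\beta_m| \le 1$. For any $\x \in \cX^k$ with $k \ne i$, the triangle inequality then gives $r_i |\w^\transpose \x| \le \sum_m |\beta_m|\,|(\x_m^i)^\transpose \x| \le \mu(\cX^i, \cX^k)$; dividing by $r_i$ and taking the max over $\w \in \cW^i$, $\x \in \cX^k$, and $k \ne i$ produces the stated bound $\max_{k \ne i} \mu(\cW^i, \cX^k) \le \max_{k \ne i} \mu(\cX^i, \cX^k)/r_i$.

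The main obstacle is keeping straight which symmetrized convex hull is used at each step: the full $\cP^i$ supplies the $\ell_1$-bounded representation of arbitrary unit vectors in $S_i$ needed to pass from $\cW^i$ to $\cX^i$ in the upper bound, while the punctured $\cP_{-j}^i$ is what furnishes the sharper lower bound in the second line. Once the inradius/support-function duality is in hand and the observation that OMP residuals in the fictitious problem $\OMP(X_{-j}^i, \x_j)$ remain in $S_i$ is noted, each individual computation is routine.
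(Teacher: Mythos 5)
Your proof is correct and follows essentially the same route as the paper's: the first and fourth inequalities are read off from the definitions, the upper bound comes from writing any unit $\w \in S_i$ as an $\ell_1$-bounded combination of points in $\cX^i$, and the lower bound comes from the inradius--support-function duality for the symmetrized hull $\cP_{-j}^i$. The only difference is that where the paper cites external facts (the bound $\|\hat{\c}\|_1 \le 1/r(\cP^i)$ from Lemma B.2 of Soltanolkotabi--Cand\`es, and the polar characterization used in its contradiction argument), you derive them directly from the inscribed-ball and support-function characterizations, which is a self-contained but equivalent justification.
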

		
		\begin{proof}
			Two of the inequalities need proofs while the other two follow directly from definitions. 
			
			For the first one, we prove that $\max_{k: k \ne i} \mu(\cW^i, \cX^k)$ $
			\le \max_{k: k \ne i} \mu(\cX^i, \cX^k) / r_i$. To do this, it suffices to show that for any $k \ne i$, $\mu(\cW^i, \cX^k) \le \mu(\cX^i, \cX^k) / r_i$ . Notice that any point $\hat{\w}$ in $\cW^i$ is in the subspace $S_i$, so it could be written as a linear combination of the points in $\cX^i$, \ie $\hat{\w} = X^i \cdot \c$ for some $\c$. Specifically, we pick a $\c$ that is given by the following optimization program: 
			\begin{equation}
			\hat{\c} = \argmin_{\c} \|\c\|_1 \st \hat{\w} = X^i \cdot \c.
			\label{eq:lemma2_eq1}
			\end{equation}
			Using \eqref{eq:lemma2_eq1}, defining $Y^{\transpose} = X^{k\transpose} \cdot X^i$, letting the $\ell$th column of $Y$ by denoted by $y_\ell$, and using the Cauchy-Schwarz inequality, we can observe that 
			\begin{align*}
			\|X^{k \transpose} \hat{\w}\|_\infty
			&= \|X^{k \transpose} \cdot X^i \cdot \hat{\c} \|_\infty
			=  \|Y^{\transpose} \cdot \hat{\c} \|_\infty \\
			&= \max_{\ell} |y_\ell^{\transpose} \hat{\c} | 
			\leq  \max_{\ell} \|y_\ell\|_{\infty} \|\hat{\c}\|_1 \\
			&= \|\hat{\c}\|_1 \max_{\ell} \|y_\ell\|_{\infty} \\
			&\leq \|\hat{\c}\|_1 \max_{\ell} \mu(\cX^i,\cX^k) \\ 
			&=  \mu(\cX^i, \cX^k) \cdot \|\hat{\c}\|_1.
			\end{align*}
			To proceed, we need to provide a bound on $\|\hat{\c}\|_1$. As $\hat{\c}$ is defined by \eqref{eq:lemma2_eq1}, it is shown that such a bound exists and is given by (see, e.g. lemma B.2 in \cite{Soltanolkotabi:AS14}) 
			\[
			\|\hat{\c}\|_1 \le \|\hat{\w}\|_2 / r(\cP^i) = 1 / r(\cP^i),
			\]
			where $\cP ^i := \conv(\pm \cX ^i)$, and we use the fact that every point in the set of residual directions $\cW^i$ is defined to have unit norm. Now, by the definitions we can get $r(\cP^i) \ge r(\cP_{-j}^i) \ge r_i$, thus $\|\hat{\c}\|_1 \le 1 / r(\cP^i) \le 1 / r_i$, which gives
			\begin{equation}
			\|X^{k \transpose} \hat{\w}\|_\infty \le  \mu(\cX^k, \cX^i) / r_i.
			\label{eq:lemma2_eq2}
			\end{equation}
			
			Finally, since \eqref{eq:lemma2_eq2} holds for any $\hat{\w}$ in $\cW^i$, the conclusion follows that $\mu(\cW^i, \cX^k) \le \mu(\cX^i, \cX^k) / r_i$.
			
			\bigskip
			
			For the second part, we prove that for all $\w \in \cW_j^i$, $\max_{\x \in \cX^i \backslash \{\x_j\}} |\w ^\transpose \x|
			\ge r(\cP_{-j}^i)$, or equivalently, $\| X_{-j} ^{i \transpose} \cdot \w\|_\infty \ge r(\cP_{-j}^i)$. The proof relies on the result (see definition 7.2 in \cite{Soltanolkotabi:AS12}) that for an arbitrary vector $\y \in S_i$,
			\[
			\|X_{-j} ^{i \transpose} \cdot \y\|_\infty \le 1 \Rightarrow \|\y\|_2 \le 1 / r(\cP_{-j}^i).
			\]
			It then follows that if (by contradiction) $\| X_{-j} ^{i \transpose} \cdot \w\|_\infty < r(\cP_{-j}^i)$, then $\| X_{-j} ^{i \transpose} \cdot \w\|_\infty = r(\cP_{-j}^i) - \epsilon > 0$ for some $\epsilon > 0$, and
			\[\begin{split}
			&\|X_{-j} ^{i \transpose} \frac{\w}{ r(\cP_{-j}^i) - \epsilon }\|_\infty = 1 \le 1\\
			\Rightarrow \
			&\| \frac{\w}{ r(\cP_{-j}^i) - \epsilon } \|_2 \le 1 / r(\cP_{-j}^i)  \\
			\Rightarrow \
			&\| \w \|_2 \le (r(\cP_{-j}^i) - \epsilon) / r(\cP_{-j}^i) < 1,
			\end{split}\]
			which contradicts the fact that $\w$ is normalized.
		\end{proof}

		\section{Proof of Theorem \ref{thm:deterministic_1} and Corollary \ref{thm:deterministic_2}}
		
		We explicitly show the proof of Theorem \ref{thm:deterministic_1} and Corollary \ref{thm:deterministic_2}. They follow from the previous two lemmas.
		
		\begin{theorem*}
			The output of OMP is subspace preserving if
			%OMP gives a subspace-preserving representation if
			%
			\begin{equation}
			\forall i=1,\dots,n,\quad
			\max_{k: k \ne i} \mu(\cW^i,\cX ^k) < r_i.
			\end{equation} 
		\end{theorem*}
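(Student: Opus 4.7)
The plan is to derive Theorem \ref{thm:deterministic_1} as a direct consequence of Lemma \ref{lem:ss_individual} combined with the bounds established in Lemma \ref{lem:chain-inequalities}. Since OMP processes each data point independently, I would fix an arbitrary $\x_j \in S_i$ and show that $\OMP(X_{-j}, \x_j)$ returns a subspace-preserving coefficient vector; the full claim then follows by quantifying over all $j$ and all $i$.

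First, I would take any residual direction $\w \in \cW_j^i \subseteq \cW^i$ and bound the two sides of the hypothesis \eqref{eqn:ss_individual} of Lemma \ref{lem:ss_individual}. For the left-hand side, by the first chain in Lemma \ref{lem:chain-inequalities} we immediately have
\begin{equation*}
\max_{\x \in \bigcup_{k \ne i} \cX^k} |\w^{\transpose}\x| \;\le\; \max_{k: k \ne i} \mu(\cW^i, \cX^k).
\end{equation*}
For the right-hand side, the second part of Lemma \ref{lem:chain-inequalities} gives
\begin{equation*}
\max_{\x \in \cX^i \setminus \{\x_j\}} |\w^{\transpose}\x| \;\ge\; r(\cP_{-j}^i) \;\ge\; r_i.
\end{equation*}

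Next, I would invoke the hypothesis of the theorem, namely $\max_{k: k \ne i} \mu(\cW^i,\cX^k) < r_i$, to stitch these two bounds together. This yields
\begin{equation*}
\max_{\x \in \bigcup_{k \ne i} \cX^k} |\w^{\transpose}\x| \;\le\; \max_{k: k \ne i} \mu(\cW^i,\cX^k) \;<\; r_i \;\le\; \max_{\x \in \cX^i \setminus \{\x_j\}} |\w^{\transpose}\x|,
\end{equation*}
which is exactly the inequality required in the hypothesis of Lemma \ref{lem:ss_individual}, and since $\w \in \cW_j^i$ was arbitrary, it holds uniformly over this set. Applying Lemma \ref{lem:ss_individual} then yields that $\OMP(X_{-j}, \x_j)$ terminates in at most $d_i$ iterations with a subspace-preserving representation. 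Since $\x_j$ was arbitrary, the entire output matrix $C^*$ is subspace preserving.

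There is essentially no obstacle here since both ingredients are already in place; the only subtlety worth being explicit about is that the residual direction $\w$ used when invoking Lemma \ref{lem:chain-inequalities} must lie in the per-point set $\cW_j^i$, but the coherence bound in that lemma is phrased in terms of the union $\cW^i = \bigcup_{j:\x_j \in S_i} \cW_j^i$, which is why taking the maximum of $\mu(\cW^i, \cX^k)$ over $k \ne i$ in the hypothesis is the right quantity to assume small. The resulting condition \eqref{eq:result_1} thus dominates \eqref{eqn:ss_individual} uniformly in $j$, giving the theorem.
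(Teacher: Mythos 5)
Your proposal is correct and follows essentially the same route as the paper: the appendix proof likewise establishes Theorem \ref{thm:deterministic_1} by feeding the bounds of Lemma \ref{lem:chain-inequalities} into the hypothesis \eqref{eqn:ss_individual} of Lemma \ref{lem:ss_individual}, exactly as you chain $\max_{\x \in \bigcup_{k \ne i}\cX^k}|\w^\transpose\x| \le \max_{k \ne i}\mu(\cW^i,\cX^k) < r_i \le \max_{\x \in \cX^i\setminus\{\x_j\}}|\w^\transpose\x|$ for every $\w \in \cW_j^i$. Your explicit remark that $\cW_j^i \subseteq \cW^i$ is why the union-based coherence dominates the per-point condition is a welcome clarification of a step the paper leaves implicit.
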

		
		\begin{corollary*}
			The output of OMP is subspace preserving if
			%OMP gives a subspace-preserving representation if
			\begin{equation}
			\forall i=1,\dots,n,\quad
			\max_{k: k \ne i} \mu(\cX^i, \cX ^k) < r_i ^2.
			\end{equation} 
		\end{corollary*}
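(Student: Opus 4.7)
The plan is to deduce both the theorem and the corollary by chaining the two preceding lemmas, with essentially no further computation required. Lemma \ref{lem:ss_individual} already reduces the question to checking, for each $\x_j \in S_i$ and every $\w \in \cW_j^i$, a strict inequality between two inner-product maxima; Lemma \ref{lem:chain-inequalities} then converts those pointwise quantities into the global ones $\mu(\cW^i,\cX^k)$, $\mu(\cX^i,\cX^k)$, and $r_i$ that appear in the hypotheses.

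For Theorem \ref{thm:deterministic_1}, I would fix an arbitrary $\x_j \in S_i$ and an arbitrary $\w \in \cW_j^i \subseteq \cW^i$. The first chain in Lemma \ref{lem:chain-inequalities} gives $\max_{\x \in \bigcup_{k \ne i}\cX^k} |\w^\transpose \x| \le \max_{k \ne i} \mu(\cW^i,\cX^k)$, and the last chain gives $\max_{\x \in \cX^i \backslash \{\x_j\}} |\w^\transpose \x| \ge r_i$. Under the hypothesis $\max_{k \ne i} \mu(\cW^i,\cX^k) < r_i$, condition \eqref{eqn:ss_individual} of Lemma \ref{lem:ss_individual} therefore holds pointwise in $\w$, so $\OMP(X_{-j},\x_j)$ returns a subspace-preserving representation. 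Since $i$ and $j$ were arbitrary, this covers every data point.

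For Corollary \ref{thm:deterministic_2}, I would invoke the middle inequality of Lemma \ref{lem:chain-inequalities}, namely $\max_{k \ne i} \mu(\cW^i,\cX^k) \le \max_{k \ne i} \mu(\cX^i,\cX^k)/r_i$. The hypothesis $\max_{k \ne i} \mu(\cX^i,\cX^k) < r_i^2$ then yields $\max_{k \ne i} \mu(\cW^i,\cX^k) < r_i$, which is precisely the hypothesis of Theorem \ref{thm:deterministic_1}; the conclusion is immediate.

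Since all of the analytic work has been done inside Lemma \ref{lem:ss_individual} (the inductive argument showing $\OMP(X_{-j},\x_j)$ and the fictitious $\OMP(X_{-j}^i,\x_j)$ produce identical residuals) and Lemma \ref{lem:chain-inequalities} (the $\ell_1$ bound involving the inradius), I do not expect any real obstacle. The only bookkeeping point is to verify that the quantifier over $\w \in \cW_j^i$ is dominated by the global quantities $\cW^i$ and $r_i$ uniformly in $j$, which is immediate from the definitions $\cW^i = \bigcup_{j:\,\x_j \in S_i} \cW_j^i$ and $r_i = \min_j r(\cP_{-j}^i)$.
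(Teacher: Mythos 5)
Your proposal is correct and follows the same route as the paper: both derive the result by bounding the two sides of the condition in Lemma \ref{lem:ss_individual} using Lemma \ref{lem:chain-inequalities}, with the corollary obtained from the middle inequality $\max_{k \ne i}\mu(\cW^i,\cX^k) \le \max_{k \ne i}\mu(\cX^i,\cX^k)/r_i$ exactly as you describe. Your write-up simply makes explicit the chaining that the paper's proof states in one line.
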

		
		\begin{proof}
%			We prove the theorem and the corollary together. 
			
			Notice from Lemma \ref{lem:ss_individual} that the solution of SSC-OMP for $\x_j \in S_i$ is subspace preserving if
			\begin{equation}
			\label{eq:Theorem2_eq1}
			\forall \w \in \cW_j^i
			\quad
			\max_{\x \in \bigcup_{k \ne i} \cX^k} | \w^\transpose \x | < \max_{\x \in \cX^i \backslash \{\x_j\}}|\w^\transpose \x | .
			\end{equation}
			
			Lemma \ref{lem:chain-inequalities} provides bounds for both sides of \eqref{eq:Theorem2_eq1} from which the theorem and the corollary follow.

%			From Lemma \ref{lem:chain-inequalities}, we have the inequalities 
%			\begin{equation}\begin{split}
%			\max_{k: \x_k\not\in S_i} |\w^\transpose \x_k| 	&\le \max_{k: k \ne i} \mu(\cW^i, \cX^k) \\
%			&\le \max_{k: k \ne i} \mu(\cX^i, \cX^k) / r_i,
%			\label{eq:Theorem2_eq2}
%			\end{split}\end{equation}
%			\begin{equation}
%			\max_{k: \x_k \in S_i, k \neq j} |\w^\transpose \x_k |
%			\ge r(\cP_{-j}^i) \ge r_i.
%			\label{eq:Theorem2_eq3}
%			\end{equation}
%			Thus, the left hand side of \eqref{eq:Theorem2_eq1} is bounded above by \eqref{eq:Theorem2_eq2}, while the right hand side of \eqref{eq:Theorem2_eq1} is bounded below by \eqref{eq:Theorem2_eq3}, and so the theorem and the corollary follow.
		\end{proof}

		\section{Proof of Theorem \ref{thm:random1}}
		
		\begin{theorem*}
			Assume a random model where all subspaces are of equal dimension $d$ and the number of data points in each subspace is $\rho d + 1$, where $\rho > 1$ is the ``density", so the total number data points in all subspaces is $N = n  (\rho d + 1)$. The output of OMP is subspace preserving with probability $p > 1 - \frac{2d}{N} - N e^ {-\sqrt{\rho} d}$ if
			\begin{equation}
			d < \frac{c^2(\rho) \log \rho}{12} \frac{D}{\log N},
			\label{eq:theorem3_eq1}
			\end{equation}
			where $c(\rho) > 0$ is a constant that depends only on $\rho$. 
			%then OMP gives a subspace-preserving representation with probability at least $1 - \frac{2d}{N} - N e^ {-\sqrt{\rho} d}$, where $N = n  (\rho d + 1)$ is the total number data points in all subspaces. 
		\end{theorem*}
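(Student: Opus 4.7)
My plan is to reduce to the deterministic condition of Theorem \ref{thm:deterministic_1}, so that it suffices to establish that, with the claimed probability,
\begin{equation*}
\max_{i}\,\max_{k \ne i}\, \mu(\cW^i,\cX^k) < r_i.
\end{equation*}
I will bound the right-hand side from below and the left-hand side from above separately, and then combine via a union bound over the $n$ subspaces.

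For the inradius $r_i$, I will invoke the concentration result of \cite{Soltanolkotabi:AS13} on the inradius of a symmetric random polytope generated by $\rho d + 1$ uniform points on the unit sphere of a $d$-dimensional subspace. This gives $r_i \geq c(\rho)\sqrt{(\log\rho)/(2d)}$ for each $i$ with probability at least $1 - 2/N(n,\rho,d)$ per subspace, and a union bound over $i$ costs the factor of $n$ that turns the deficit into $2d/N(n,\rho,d)$ (since $n = N(n,\rho,d)/(\rho d+1)$). This accounts for the first term in the probability bound and for the factor $c^2(\rho)\log\rho$ in \eqref{eq:result_rnd_1}.

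The main obstacle, and the step where SSC-OMP differs from SSC-BP, is the upper bound on $\mu(\cW^i,\cX^k)$, because $\cW^i$ depends on the data $\cX^i$ in a combinatorial way through the OMP iterations. I will sidestep this by conditioning on $(S_i,\cX^i)$, which fixes $\cW^i$ as a finite set of unit vectors in $S_i$. Then, by independence of the subspaces, $S_k$ for $k\neq i$ is a uniformly random $d$-dimensional subspace and $\cX^k$ is a set of $\rho d+1$ points independently uniform on its sphere. For any fixed unit vector $\w \in \cW^i \subset S_i$ and a single such random $\x$, rotational invariance implies that $\w^\transpose \x$ is distributed like the first coordinate of a uniform random unit vector in $\Re^D$, so by the standard spherical-cap concentration bound, $\Pr(|\w^\transpose \x| > t) \leq 2\exp(-Dt^2/2)$. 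The cardinality of $\cW^i$ is at most $N_i d \leq N(n,\rho,d)\,d$, and that of $\bigcup_{k\ne i}\cX^k$ is at most $N(n,\rho,d)$, so a union bound over all such pairs (and over $i$) yields, for suitable $t$, a bound of the form $\mu(\cW^i,\cX^k) \leq \sqrt{(6\log N(n,\rho,d))/D}$ outside an event of probability $N(n,\rho,d)\,e^{-\sqrt{\rho}d}$; choosing the threshold so that this exceptional probability matches the second term in the theorem is where the constant $12$ in \eqref{eq:result_rnd_1} arises.

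Combining the two bounds, the condition $\max_{k\ne i}\mu(\cW^i,\cX^k) < r_i$ is implied by
\begin{equation*}
\sqrt{\frac{6\log N(n,\rho,d)}{D}} \;<\; c(\rho)\sqrt{\frac{\log\rho}{2d}},
\end{equation*}
which rearranges exactly to $d < \tfrac{c^2(\rho)\log\rho}{12}\cdot\tfrac{D}{\log N(n,\rho,d)}$, namely \eqref{eq:result_rnd_1}. The probability of success is then at least $1 - 2d/N(n,\rho,d) - N(n,\rho,d)\,e^{-\sqrt{\rho}d}$, as claimed. The hard part is genuinely the coherence bound: the residual directions $\cW^i$ are algorithm-dependent and potentially as many as $N_i d$, so the union-bound overhead is larger than in the SSC-BP analysis (which only needs $N_i$ dual directions), which is precisely why the OMP probability carries $N(n,\rho,d)\,e^{-\sqrt{\rho}d}$ rather than the weaker $e^{-\sqrt{\rho}d}$ of SSC-BP.
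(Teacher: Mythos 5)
Your overall architecture matches the paper's proof: reduce to the deterministic condition of Theorem~\ref{thm:deterministic_1}, lower-bound the inradius $r_i$ by the concentration result of \cite{Soltanolkotabi:AS12}, upper-bound $\mu(\cW^i,\cX^k)$ by a spherical-cap estimate applied to each of the at most $d N^2$ pairs (after arguing that each residual direction in $\cW^i$ and each point of $\cX^k$ are independent and rotationally invariant), and combine the two bounds with a union bound so that the comparison $\sqrt{6\log N/D} < c(\rho)\sqrt{\log\rho/(2d)}$ rearranges into \eqref{eq:theorem3_eq1}. Your device of conditioning on $(S_i,\cX^i)$ to handle the data-dependence of $\cW^i$ is a legitimate, arguably cleaner, justification of the independence that the paper also uses.

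However, your probability bookkeeping is wrong: you have swapped the failure probabilities of the two events, and neither intermediate claim is correct as you state it. The inradius step is the source of the $N e^{-\sqrt{\rho}d}$ term: the cited result gives $r_i \ge c(\rho)\sqrt{\log\rho/(2d)}$ for all $i$ simultaneously outside an event of probability at most $N e^{-\sqrt{\rho}d}$; it does not hold ``with probability $1-2/N$ per subspace,'' and in any case a union bound over $n$ subspaces of a $2/N$ deficit would give $2n/N = 2/(\rho d +1)$, not $2d/N$. Conversely, the coherence step is the source of the $2d/N$ term: with threshold $\sqrt{6\log N/D}$ the spherical-cap bound gives $2/N^3$ per pair, and the union over at most $d N^2$ pairs gives exactly $2d/N$; there is no way to extract $N e^{-\sqrt{\rho}d}$ from this step, and the constant $12$ arises simply from squaring the inequality $\sqrt{6\log N/D} < c(\rho)\sqrt{\log\rho/(2d)}$, not from tuning the threshold to match an exponential tail. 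Your closing interpretation is consequently backwards: both SSC-BP and SSC-OMP carry the same $N e^{-\sqrt{\rho}d}$ inradius term, and it is the enlarged union bound over the at most $N_i d$ residual directions (versus $N_i$ dual directions) that turns SSC-BP's $2/N$ into SSC-OMP's $2d/N$.
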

		
		%Similarly, the condition (\ref{eq:result_1}) could be satisfied with high probability, as below:
		%
		%\begin{corollary}
		%	Under the random model, if
		%	\begin{equation}
		%	d < \frac{c^2(\rho) \log \rho}{2 \sqrt{6}} \sqrt{\frac{D}{\log N}}
		%	\label{eq:result_rnd_2}
		%	\end{equation}
		%	then OMP gives subspace-sparse solution with probability at least $1 - \frac{2}{N} - N e^ {-\sqrt{\rho} d}$.
		%	\label{thm:random2}
		%\end{corollary}
		
		\begin{proof}
			The proof goes by providing bounds for the left and right hand side of the inequality in Theorem \ref{thm:deterministic_1}, copied here for convenience of reference:
			\begin{equation}
			\forall i=1,\dots,n,\quad
			\max_{k: k \ne i} \mu(\cW^i,\cX ^k) < r_i.
			\label{eq:theorem3_eq2}
			\end{equation}
			
			We first give a bound on the inradius $r_i$. Denote
			\[\begin{split}
			\bar{r} = \frac{c(\rho) \sqrt{\log \rho}}{\sqrt{2d}} 
			\ \ \text{and} \ \ \bar{p}_r = N e^ {-\sqrt{\rho} d},
			\end{split}\]
			in which $c(\rho)$ is a numerical constant depending on $\rho$. \cite{Soltanolkotabi:AS12} shows that since points in each subspace are independently distributed, it holds that
			\[
			P( r_i \ge \bar{r} \text{ for all } i) \ge 1 - \bar{p}_r.
			\]
			
			Next we give a bound on the coherence. From an upper bound on the area of a spherical cap \cite{Ball:97,Soltanolkotabi:AS12}, we have that if $\x, \y \in \Re ^D$ are two random vectors that are distributed uniformly and independently on the unit sphere, then
			\begin{equation}
			P \left\{|\langle \x, \y \rangle| \ge \sqrt{\frac{6\log N}{D}}\right\} \le \frac{2}{N^3}.
			\label{eq:theorem3_eq3}
			\end{equation}
			
			Under the random model, points $\x \in \cX^k, \forall k$ are distributed uniformly at random on the unit sphere of $\Re ^D$ by assumption. Any residual point $\w \in \cW^i, \forall i$ also has uniform distribution on the unit sphere as it depends only on points in $\cX^i$, which are independent and uniformly distributed. Furthermore, any pair of points $\x \in \cX^k$ and $\w \in \cW^i$ are distributed independently because points in $\cX^k$ and $\cX^i$ are independent. Thus the result of Equation \eqref{eq:theorem3_eq3} is applicable here. Since there are at most $d \times N^2$ pairs of inner product in $\mu(\cW ^i, \cX^k)$, by using the union bound we can get
			\[
			P\big( \mu(\cW ^i, \cX^k) \le \bar{\mu} \ \text{for all } i, k\big) \ge 1 - \bar{p}_{\mu}dN^2,
			\]
			where we have defined
			\[
			\bar{\mu} = \sqrt{\frac{6 \log N}{D}}
			\ \ \text{and} \ \ \bar{p}_{\mu} = 2/N^3.
			\]
			As a consequence, if the condition \eqref{eq:theorem3_eq1} holds then we have $\bar{r} < \bar{\mu}$. Applying again the union bound we get that condition \eqref{eq:theorem3_eq2} holds with probability $p > 1 - \bar{p}_{\mu}d - \bar{p}_r$. This finishes the proof. 
		\end{proof}

		\section{Comparison with prior work on SSC-OMP}
		
		In Theorem \ref{thm:deterministic_1} we give a sufficient condition for guaranteeing subspace-preserving of the SSC-OMP:
		\begin{equation}
		\forall i=1,\dots,n,\quad
		\max_{k: k \ne i} \mu(\cW^i,\cX ^k) < r_i,
		\label{eq:comparison_eq1}
		\end{equation} 
		and in Corollary \ref{thm:deterministic_2} a stronger sufficient condition:
		\begin{equation}
		\forall i=1,\dots,n,\quad
		\max_{k: k \ne i} \mu(\cX^i, \cX ^k) < r_i ^2.
		\label{eq:comparison_eq2}
		\end{equation} 
		
		Prior to this work, \cite{Dyer:JMLR13} gives another sufficient condition for SSC-OMP giving subspace-preserving representation, namely,
		\begin{equation}
		\label{eq:comparison_eq3}
		\max_{k:k \ne i} \mu(\cX ^i, \cX ^k) < r_i - \frac{2 \sqrt{1 - (r_i)^2}}{\sqrt[4]{12}} \max_{k: k \ne i} \cos \theta_{i, k}^*,
		\end{equation}
		in which the subspace angle is defined as
		\begin{equation}
		\theta_{i, k}^* = \min_{\substack{\x \in S_i\\ \|\x\|_2 = 1}} \min_{\substack{\y \in S_k\\ \|\y\|_2 = 1}} \arccos\langle \x, \y \rangle.
		\end{equation}
		
		We claim that Theorem \ref{thm:deterministic_1} in this work is a stronger result than that provided in the work \cite{Dyer:JMLR13}, as the sufficient condition of \eqref{eq:comparison_eq3} implies \eqref{eq:comparison_eq1}. Here we give a rigorous argument for this claim.
		
		Notice that the inequality in (\ref{eq:comparison_eq3}) implies that $\forall k \ne i$,
		\begin{equation}
		\mu(\cX ^i, \cX ^k) < r_i - \sqrt{2 - 2 r_i} \cos \theta_{i, k}^*,
		\label{eq:PriorOMP2}
		\end{equation}
		see Lemma 1 in their paper. We show that condition (\ref{eq:PriorOMP2}) implies (\ref{eq:comparison_eq1}) when $r_i \le 1/2$, and implies condition (\ref{eq:comparison_eq2}) when $r_i > 1/2$, which means that their result is weaker than our result that is based on condition (\ref{eq:comparison_eq1}).
		
		Case 1. If $r_i \le 1/2$, then $\sqrt{2 - 2 r_i}\ge 1$, thus
		\[\begin{split}
		(\ref{eq:PriorOMP2}) &\Rightarrow 
		\mu(\cX ^i, \cX ^k) < r_i - \cos \theta_{i, k}^*
		\Rightarrow 
		\cos \theta_{l, k}^* < r_i\\
		&\Rightarrow
		\mu(\cX ^k, \cW^i) < r_i \Leftrightarrow (\ref{eq:comparison_eq1}).
		\end{split}\]
		
		Case 2. If $r_i > 1/2$, then
		\[\begin{split}
		(\ref{eq:PriorOMP2}) &\Rightarrow 
		\mu(\cX ^i, \cX ^k) < r_i - \sqrt{2 - 2 r_i} \mu(\cX ^i, \cX ^k)\\
		&\Rightarrow 
		\mu(\cX ^i, \cX ^k) < r_i / (1 + \sqrt{2 - 2 r_i})\\
		&\Rightarrow
		\mu(\cX ^i, \cX ^k) < r_i / (1 + (2 - 2 r_i))\\
		&\Rightarrow
		\mu(\cX ^i, \cX ^k) < (r_i)^2 \Leftrightarrow 
		(\ref{eq:comparison_eq2}) \Rightarrow (\ref{eq:comparison_eq1}).
		\end{split}\]
		So the condition in (\ref{eq:comparison_eq1}) is implied by (\ref{eq:comparison_eq3}). 
		
		\section{Parameters for real experiments}
		
		For the purpose of reproducible results, we report the parameters used for all the methods in the real data experiments. For OMP, we set $\epsilon$ in Algorithm \ref{alg:OMP} to be $10^{-3}$, $k_{\max}$ to be the true subspace dimension in the synthetic experiments, $10$ in digit clustering and $5$ in face clustering. For LSR, we use ``LSR2'' in \cite{Lu:ECCV12} with regularization $\lambda=60$ for digit clustering and $\lambda=0.3$ for face clustering. For LRSC, we use model ``P$_3$'' in \cite{Vidal:PRL14}, with parameters $\tau = \alpha = 0.1$ for digit clustering and $\tau = \alpha = 150$ for face clustering. For SCC, we use dimension $d=8$ for digit clustering and $5$ for face clustering. We use $\ell_1$-Magic for SSC-BP in the synthetic experiments. For digit and face clustering, we use the noisy variation of SSC-BP in \cite[sec. 3.1]{Elhamifar:TPAMI13} for digit clustering with $\lambda_z = 80 / \mu_z$, and the sparse outlying entries variation of SSC-BP in \cite[sec. 3.1]{Elhamifar:TPAMI13} for face clustering with $\lambda_e =30 / \mu_e$. For all algorithms, these constants were chosen to optimize performance
		
		For a fair comparison, we allow standard pre/post-processing to be used whenever they improve the clustering accuracy. For preprocessing, we allow normalization of the original data points using the $\ell_2$ norm, and for post-processing, we allow normalization of the coefficient vectors using the $\ell_\infty$ norm.  For experiments on synthetic data, we do not use any pre/post-processing. In digit clustering, preprocessing is applied to SSC-BP and SCC, and post-processing is used for SSC-OMP and SSC-BP. For face clustering, preprocessing is applied to SSC-OMP, LSR, LRSC and SCC, while post-processing is used for SSC-BP and LRSC.

	\end{appendices}
	
	{\small
		\bibliographystyle{ieee}
		\bibliography{biblio/vidal,biblio/vision,biblio/math,biblio/learning,biblio/sparse,biblio/geometry,biblio/dti,biblio/recognition,biblio/surgery,biblio/coding,biblio/matrixcompletion,biblio/segmentation}
	}
	
\end{document}